\newtheorem{theorem}{Theorem}[section]
\newtheorem*{theorem*}{Theorem}
\newtheorem{lemma}[theorem]{Lemma}
\newcommand{\bbR}{\mathbb{R}}
\newcommand{\bbE}{\mathbb{E}}
\newcommand{\cX}{\mathcal{X}}
\newcommand{\cR}{\mathcal{R}}
\newcommand{\cF}{\mathcal{F}}
\newcommand{\cG}{\mathcal{G}}
\newcommand{\argmin}{\mathop{\mathrm{argmin}}}
\begin{document}

%

%

\twocolumn[

\aistatstitle{Offline Stochastic Optimization of Black-Box Objective Functions}

\aistatsauthor{ Juncheng Dong\textsuperscript{1} \And Zihao Wu\textsuperscript{1} \And Hamid Jafarkhani\textsuperscript{2} \And  Ali Pezeshki\textsuperscript{3} \And Vahid Tarokh\textsuperscript{1}}

\aistatsaddress{} ]
\begin{abstract}
Many challenges in science and engineering, such as drug discovery and communication network design, involve optimizing complex and expensive black-box functions across vast search spaces. Thus, it is essential to leverage existing data to avoid costly active queries of these black-box functions. To this end, while Offline Black-Box Optimization (BBO) is effective for deterministic problems, it may fall short in capturing the stochasticity of real-world scenarios. To address this, we introduce \emph{Stochastic~Offline~BBO} (SOBBO), which tackles both black-box objectives and uncontrolled uncertainties. We propose two solutions: for large-data regimes, a differentiable surrogate allows for gradient-based optimization, while for scarce-data regimes, we directly estimate gradients under conservative field constraints, improving robustness, convergence, and data efficiency. Numerical experiments demonstrate the effectiveness of our approach on both synthetic and real-world tasks.
\end{abstract}

\section{Introduction}\label{sec:intro}
Many core challenges in science and engineering, including drug discovery and the design of communication network topologies, involve optimizing complex black-box functions within vast search spaces. The primary difficulty lies in the fact that evaluating these black-box functions is often costly and time-intensive, e.g., real-world lab experiments are necessary to evaluate the effectiveness of a proposed drug design. It is thus beneficial to utilize existing data so that we can circumvent active query of the expensive black-box objective. 
To this end, offline black-box optimization (BBO) assumes an dataset of historical function evaluations and aim to find an optimized design \textbf{\emph{with only the offline dataset}}~\citep{coms}. Specifically, the goal of offline BBO is to optimize a black-box function $\nu:\Theta \rightarrow \bbR$, that is, to identity $\argmin_{\theta \in \Theta} \nu(\theta)$ with only an offline dataset $\{\theta_i,y_i\}^n_{i=1}$ where $y_i = \nu(\theta_i)$ is the objective value for $\theta_i$. 
Despite its success, offline BBO focuses solely on deterministic optimization problems, which \textbf{\emph{may fall short in capturing the inherent stochasticity of real-world scenarios}}.

\textbf{Stochastic Offline BBO.} In this light, we propose and investigate a new type of optimization, namely \emph{Stochastic Offline BBO} (\textbf{SOBBO}), that is at the intersection of offline BBO and stochastic optimization. In addition to the black-box objective function, SOBBO accounts for \textbf{\emph{out-of-control uncertainties}}, and the goal is find a design that is optimal \textbf{\emph{in expectation}}. To make it more concrete, communication-network design often involves complicated stochasticity such as the change of weather and unexpected signal interference. An optimized design should perform well in expectation across all sorts of weather and interferences. 

Therefore, let $X \in \mathcal{X}$ be a random variable that represents such stochasticity and $\theta$ the design parameter, the goal of SOBBO is to find an optimal $\theta^\star$ such that
\begin{equation}\label{eqn:sobbo-obj}
    \theta^\star \in \argmin_{\theta \in \Theta}\Big\{\nu(\theta) = \bbE_{X}\left[g(\theta,X)\right]\Big\},
\end{equation}
where $g:\Theta \times \mathcal{X} \rightarrow \mathbb{R}$ is the black-box objective function. We will call $\nu(\theta)$ in~\eqref{eqn:sobbo-obj} as the \emph{value function} that we aim to optimize. Here, the distribution of $X$ is invariant to the value of $\theta$, representing that the randomness of $X$ cannot be controlled such as the change of weather in the real world. 

The objective in~\eqref{eqn:sobbo-obj} is reminiscent of the stochastic optimization (SO)~\citep{stochastic_opt_review}. However, the key difference is that \textbf{\emph{the objective function}} $g$ \textbf{\emph{in SO is assumed to be either known or can be easily evaluated}} while $g$ in SOBBO is \textbf{\emph{a block-box function whose evaluation is expensive and time-consuming}}, e.g., complex and extended simulations are inevitable when the performance of a specific network design is under query. 
\begin{figure}[h]
\includegraphics[width=\columnwidth]{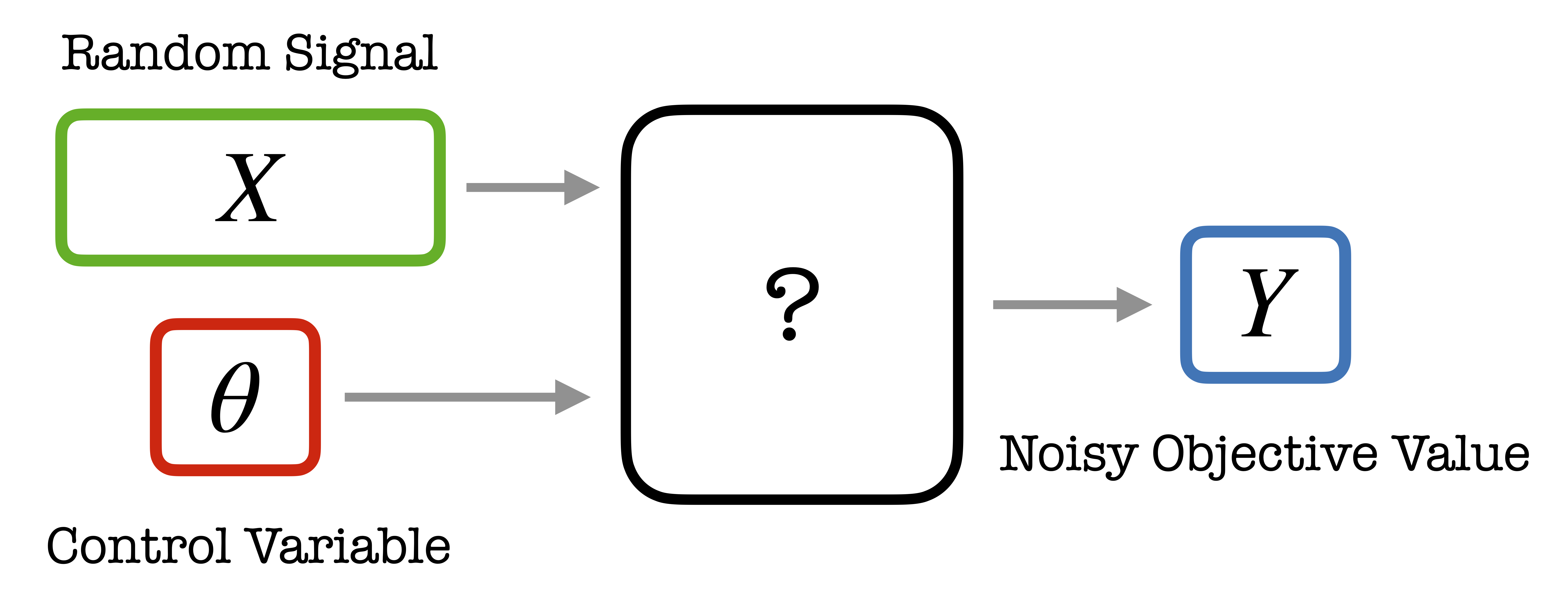}
\caption{This work considers \textbf{\emph{stochastic}} optimization of a \textbf{\emph{black-box}} objective function from only \textbf{\emph{historical data}}.}
\label{fig:prob-setting}
\end{figure}

{\bf Problem Setting.} To facilitate offline learning, SOBBO assumes an offline (historical) dataset $\mathcal{D}=\{\theta_k,x_k,y_k\}^n_{i=1}$, consisting of $n$ independent and identically distributed (iid) samples of the random tuple $(\theta,X,Y)$ where $\theta$ has distribution $P_\theta$, $X$ has law $P_X$, and $Y=g(\theta,X)+\epsilon$ is the (noisy) objective value with $\epsilon \sim\mathcal{N}(0,\sigma^2)$ as the noise which is independent of $\theta$ and $X$. See Figure~\ref{fig:prob-setting} for a visual illustration. With \emph{only} the offline dataset $\mathcal{D}$ and \emph{without} knowledge about $g$, the goal of SOBBO is to find a design $\theta$ whose true value $\nu(\theta)$ is close to the optimal value $\nu(\theta^\star)$. 
For technical soundness, we assume $g$ is bounded and continuously differentiable, allowing the applications of gradient-based optimization techniques; we also assume $\nabla_\theta\bbE_X[g(\theta,X)]$ exists for all $\theta$ so that the gradients can be estimated through data~\citep{renyi2007probability}.
\begin{figure}[h]
    \centering
    \includegraphics[width=\columnwidth]{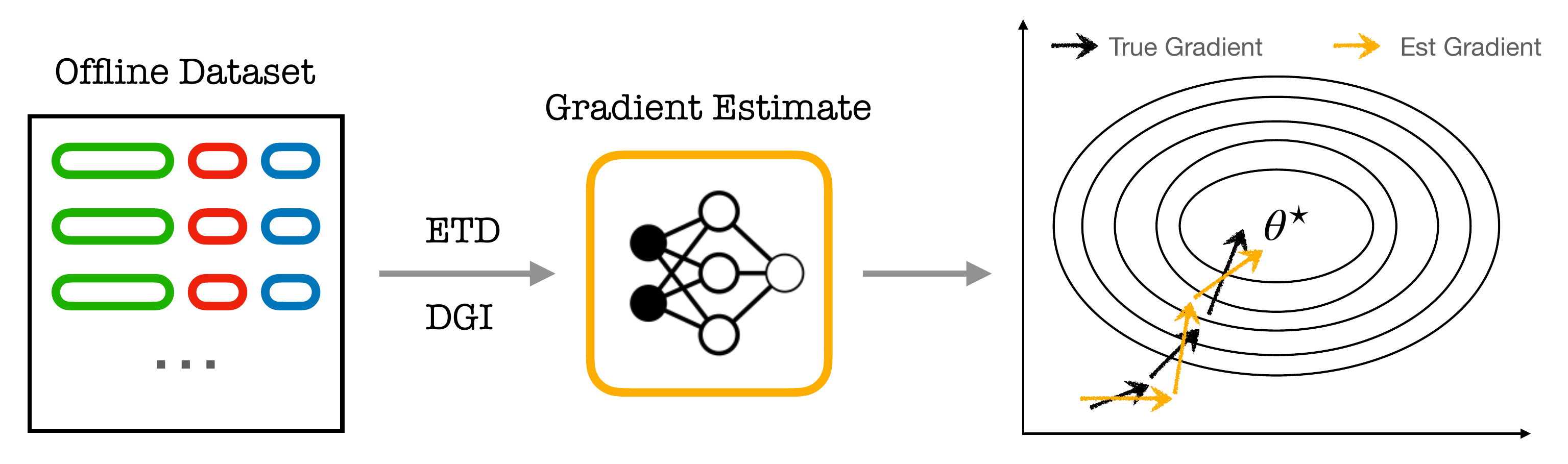}
    \caption{Our proposed methods learn to interpolate the true (unobserved) gradient with an offline dataset, and subsequently generate gradient estimates to be employed by various gradient-based techniques to find the optimal design.}
    \label{fig:pipeline}
\end{figure}

{\bf Motivation.} In the setting of SO where $g$ is known and there exists a set $\mathcal{D}_x=\{x_k\}^n_{k=1}$ containing iid samples of $X$. The most straightforward approach to optimize~\eqref{eqn:sobbo-obj} is through stochastic gradient descent (SGD)~\citep{arora2018convergence} where we approximate the gradient of $\nu$ with 
$$
\nabla_{\theta}\nu_n(\theta)=\frac{1}{n}\sum^n_{k=1}\nabla_\theta g(\theta,x_k).
$$
While the effectiveness of SGD and its variations (e.g., SGD with momentum~\citep{polyak1964some}) has been both theoretically proven and empirically validated in the context of SO, these gradient-based approaches become inaccessible in the setting of SOBBO, where $g$ is a black-box function. To this end, we propose methods that \textbf{\emph{estimate the inaccessible gradient function}} $\nabla_\theta g$ \textbf{\emph{so that all the well-established gradient-based optimization approaches become applicable}}, and existing high-quality code bases for gradient-based methods, including efficiency-optimized auto-differentiation frameworks~\citep{paszke2019pytorch}, can be reused.
See Figure~\ref{fig:pipeline} for a conceptual demonstration of the proposed pipeline.


{\bf Contributions.} In particular, we propose two distinct solutions, tailored to different application scenarios of SOBBO based on the volume of available offline data. Specifically, we consider two scenarios:
\begin{itemize}
    \item \textbf{Large-Data Regime:} We propose  \emph{Estimate-Then-Differentiate} (ETD) for applications with opulent data (see Section~\ref{sec:large-data}). ETD first estimates the black-box objective function using a differentiable function, such as a neural network, and subsequently applies its gradient for optimization. ETD is \textbf{\emph{convenient and easy to implement}} with modern deep learning frameworks, leveraging the extensive development tools created by the community to rapidly build solutions for new data-driven problems. Notably, we \textbf{\emph{establish asymptotic theoretical guarantees for this approach}}, supporting its validity in the large-data regime.
    \item \textbf{Scarce-Data Regime:} We propose \emph{Deep Gradient Interpolation} (DGI) for applications with limited data (see Section~\ref{sec:small-data}). DGI has stronger inductive bias that leads to \textbf{\emph{more stable convergence and improved performance under limited data}}. Specifically, DGI involves directly estimating the true gradient while accounting for the requirement that gradients are conservative fields.
\end{itemize}

The effectiveness of the proposed methods is demonstrated through extensive experiments on a wide range of simulation problems and real-world applications (see Section~\ref{sec:exp}). 

\section{Related Work}

{\bf Stochastic Optimization (SO) and Offline BBO.} 
This work considers SO without the gradient information of the objective function. 
Derivative-free methods (DFMs), exemplified by the random search algorithms, systematically probe the solution space 
until certain predefined criteria is met~\citep{brooks1958discussion,pronzato1984general}. Evolutionary random search algorithms further integrate mechanisms such as mutation~\citep{holland1992adaptation}, adaptive selection~\citep{reeves1997genetic,whitley1994genetic}, simulated annealing~\citep{bertsimas1993simulated,skiscim1983optimization}, and memory structures~\citep{glover1990tabu, glover1999scatter}. 
While DFMs can be used to optimize unknown objective functions, they require real-time access to the objective values of candidate solutions at each search step, e.g., through a simulator. One research direction close to ours is the Meta-Simulation Optimization (MSO)~\citep{tekin2004simulation}. MSO also addresses the scenarios where the objective functions are unknown and considered as black boxes. However, similar to DFMs, MSO methods also focus on the online setting where the objective functions can be evaluated in real time. 
\textbf{\emph{Our work addresses a markedly different and more challenging setting, where only a dataset of historical observations is available}}. 
On the other hand, offline BBO algorithms also consider optimization problems of black-box functions with historical data~\citep{coms,mashkaria2023generative,chemingui2024offline}. However, they primarily focus on deterministic problems, whereas SOBBO addresses scenarios with uncontrollable stochasticity. 

{\bf Gradient-based Optimization Algorithms.}  Motivated by the recent tremendous success of gradient-based optimization techniques in data-driven applications~\citep{haji2021comparison, ruder2016overview}, a series of work have established the efficacy of gradient-based methods in non-convex and stochastic optimization problems~\citep{ hu2024convergence, lei2020adaptivity, yazan2017comparison}. 
Gradient descent remains a foundational technique in training deep neural networks due to its robust convergence guarantees~\citep{arora2018convergence} and computational efficiency~\citep{bottou2010large}. 
Representative methods include SGD with momentum~\citep{liu2020improved,polyak1964some}, Adam~\citep{kingma2014adam}, Nesterov's accelerated gradient~\citep{nesterov1983method}, Adagrad~\citep{duchi2011adaptive}, and RMSprop~\citep{hinton2012neural}. 
See~\cite{daoud2023gradient} for a detailed review. However, due to their requirements of the gradient of objective functions, stochastic gradient methods are not directly applicable in the setting of SOBBO. 
To this end, \textbf{\emph{our work aims to develop methods that effectively extract gradient information from pre-established datasets so that existing gradient-based methods can be directly reused}}. 

{\bf Gradient Estimation.} Existing gradient estimation techniques typically follow finite-difference schemes, such as Perturbation Analysis or Likelihood Ratio/Score Function, to approximate the gradient~\citep{glasserman1990gradient,ho2012perturbation,rubinstein1993discrete}. Simultaneous perturbation stochastic approximation (SPSA)~\citep{spall1992multivariate} achieves superior results with just two sample points. More recent advancements in SPSA include integrating global search features by introducing Monte Carlo perturbations during updates~\citep{maryak2001global,wang2011discrete} and refining estimates of Jacobian and Hessian matrices~\citep{spall2009feedback}. However, all these algorithms require real-time access to objective functions, imposing considerable constraints on the optimization environment and incurring high simulation costs. \textbf{\emph{Our work overcomes these drawbacks by solely utilizing historical data}}.

\section{Large-Data Regime: Estimate-Then-Differentiate (ETD)}\label{sec:large-data}
As motivated in Section~\ref{sec:intro}, when the black-box objective function $g$ is known, we can solve the objective of SOBBO in~\eqref{eqn:sobbo-obj} by estimating the gradient of the true value function $\nabla_\theta\nu$ through \emph{sampling average approximation} (SAA)~\citep{kim2015guide}:
\begin{equation}\label{eqn:saa-gradient}
    \nabla_\theta\nu_n(\theta) = \frac{1}{n}\sum_{k=1}^n \nabla_\theta g(\theta,x_k).
\end{equation}
Under mild conditions $\nabla_\theta\nu_n(\theta)$ converges almost surely to the true gradient $\nabla_\theta \nu(\theta)$ for any $\theta \in \Theta$ by the \emph{Strong Law of Large Numbers}~\citep{renyi2007probability}. However, when $g$ is a black-box function, $\nabla_\theta g$ becomes inaccessible thus the approximator in~\eqref{eqn:saa-gradient} cannot be applied. 

{\bf Method.} To this end, a series of preceding work have established that over-parameterized neural networks are effective in interpolating functions in \emph{large-data regime}. Motivated by this, we propose to \textbf{\emph{(i)}} first estimate the true black-box objective function $g(\theta,x): \Theta\times\cX \rightarrow \bbR$ with a differentiable estimator $g_\phi(\theta,x):\Theta\times\cX \rightarrow \bbR$ where $\phi$ is its parameter, and \textbf{\emph{(ii)}} subsequently use its derivative $\nabla_\theta g_\phi$ for gradient-based optimization. We term our method \emph{Estimate-then-Differentiate} (\textbf{ETD}). 

{\bf Training.} Specifically, during training, we apply \emph{Empirical Risk Minimization} (ERM) with the given offline dataset $\mathcal{D}$ to obtain the estimator $g_{\phi_n}$ where 
\begin{equation}\label{eqn:obj-erm}
     \phi_n \in \argmin_{\phi} \left\{L_{\text{ERM}}(\phi)=\frac{1}{n}\sum_{k=1}^n\left(g_\phi(\theta_k,x_k)-y_k\right)^2\right\}.
\end{equation}

{\bf Inference.} During inference, i.e., to solve Problem~\eqref{eqn:sobbo-obj}, 
we apply the trained estimator $g_{\phi_n}$, along with the samples $\mathcal{D}_x=\{x_k\}^n_{k=1}$ in the training dataset (we assume no access to extra data), to estimate the true gradient $\nabla_\theta\nu(\theta)$ via:
\begin{equation}\label{eqn:etd-grad-est}
    \widehat{\nabla_\theta\nu_n}(\theta) := \frac{1}{n}\sum_{k=1}^n \nabla_\theta g_{\phi_n}(\theta,x_k).
\end{equation}
With the estimated gradient $ \widehat{\nabla_\theta\nu_n}(\theta)$, any gradient-based optimization technique, appropriate for the target problem, can be chosen to solve SOBBO. In this our experiments, we choose $g_\phi$ to be deep neural networks (DNNs). This approach is convenient with all the auto-differentiation software frameworks designed for deep learning. 

{\bf Theoretical Guarantee.} Notably, if $\widehat{\nabla_\theta\nu_n}(\theta)$ is a consistent estimator of the true gradient $\nabla_\theta \nu(\theta)$, then we reduce the problem of SOBBO to the canonical setting of SO. Thus, all the established guarantees about stochastic optimization with gradient-based techniques can help establish the soundness of this method~\citep{shapiro1996convergence,arora2018convergence}. 
We next establish that \textbf{\emph{the proposed gradient estimator of ETD in~\eqref{eqn:etd-grad-est} is indeed consistent}}.

\begin{theorem}\label{thm:consistency-of-gradient}
Let $\cG$ be a function class of bounded continuously differentiable functions such that $g \in \cG$. Assume that $\Theta$ and $\cX$ are compact. Let $\widehat g_n \in \cG$ be the solution to the Empirical Risk Minimization problem defined in~\eqref{eqn:obj-erm}, where $n$ is the total number of training samples in the offline dataset. Let 
$$
\eta(\theta) = \nabla_{\theta}\bbE_X[g(\theta,X)]\;\;\text{and}\;\;\widehat{\eta}_n(\theta) = \frac{1}{n}\sum^n_{i=1}\nabla_{\theta}\widehat g_n(\theta,X_i) 
$$
respectively denote the true gradient and the gradient estimate using $\widehat g_n$ and the offline dataset $\mathcal{D}$, then 
\begin{align*}\bbE_\theta\bigg[\big\|\eta(\theta)-\widehat{\eta}_n(\theta)\big\|\bigg]  \stackrel{P}{\rightarrow} 0,
\end{align*}
where $\stackrel{P}{\rightarrow}$ denotes convergence in probability. 
\end{theorem}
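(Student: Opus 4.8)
The plan is to split the estimation error into a sampling-average-approximation (SAA) part that involves only the true $g$, and an approximation part that measures how well $\widehat g_n$ reproduces the gradient of $g$. Using the interchange of gradient and expectation assumed in the problem setup, I write $\eta(\theta) = \bbE_X[\nabla_\theta g(\theta, X)]$ and, for each $\theta$, decompose
\begin{equation*}
\eta(\theta) - \widehat\eta_n(\theta) = \underbrace{\Big(\bbE_X[\nabla_\theta g(\theta,X)] - \tfrac1n\textstyle\sum_i \nabla_\theta g(\theta, X_i)\Big)}_{\text{SAA error}} + \underbrace{\tfrac1n\textstyle\sum_i\big(\nabla_\theta g(\theta, X_i) - \nabla_\theta \widehat g_n(\theta, X_i)\big)}_{\text{approximation error}}.
\end{equation*}
Taking $\bbE_\theta\|\cdot\|$ and applying the triangle inequality bounds the target quantity by $\sup_\theta\|\text{SAA error}\| + \sup_{\theta,x}\|\nabla_\theta g(\theta,x) - \nabla_\theta\widehat g_n(\theta,x)\|$, so it suffices to show that each of these two suprema vanishes (the first almost surely, the second in probability).

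For the SAA error I would invoke a uniform strong law. Since $g \in \cG$ is continuously differentiable and $\Theta \times \cX$ is compact, $\nabla_\theta g$ is continuous, hence bounded and equicontinuous on $\Theta \times \cX$; a uniform law of large numbers for continuous functions on a compact index set with integrable envelope then yields $\sup_\theta\|\bbE_X[\nabla_\theta g(\theta,X)] - \tfrac1n\sum_i\nabla_\theta g(\theta, X_i)\| \rightarrow 0$ almost surely.

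The crux is the approximation error, where I must upgrade the function-value fit delivered by ERM to a \emph{uniform gradient} fit; ERM controls an $L^2$ distance, not gradients, so this step genuinely uses the structure of $\cG$. I would proceed in two stages. (a) \emph{ERM consistency in $L^2$}: because $\epsilon$ is independent mean-zero noise, the population risk satisfies $R(\psi) = \|\psi - g\|_{L^2(P_\theta\times P_X)}^2 + \sigma^2$, minimized exactly at $g$; combined with a Glivenko--Cantelli argument over $\cG$ (valid because $\cG$ is bounded and $\Theta\times\cX$ is compact, so the squared-loss class has an integrable envelope and is totally bounded), the empirical optimality of $\widehat g_n$ forces $\|\widehat g_n - g\|_{L^2}^2 \stackrel{P}{\rightarrow} 0$. (b) \emph{From $L^2$ to $C^1$}: assuming $\cG$ is precompact in the $C^1$ norm (which follows from Arzel\`a--Ascoli once the functions in $\cG$ and their $\theta$-gradients are uniformly bounded and equicontinuous), a subsequence argument promotes $L^2$ convergence to $C^1$ convergence—along any subsequence, extract a $C^1$-convergent further subsequence whose limit must equal $g$ by uniqueness of the $L^2$ limit, forcing $\sup_{\theta,x}\|\nabla_\theta\widehat g_n - \nabla_\theta g\| \rightarrow 0$; since this holds along every subsequence it holds in probability.

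Combining the two suprema then gives $\bbE_\theta\|\eta(\theta) - \widehat\eta_n(\theta)\| \stackrel{P}{\rightarrow} 0$. I expect step (b) to be the main obstacle: closeness in function value does not in general imply closeness of gradients, so the argument truly requires a compactness/regularity hypothesis on $\cG$ (uniform $C^1$ bounds and equicontinuity of gradients), without which the conclusion can fail. The remaining technical labor lies in verifying the uniform-convergence (Glivenko--Cantelli) conditions for the ERM step and in making the $C^1$-precompactness of $\cG$ precise.
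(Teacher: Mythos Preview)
Your proposal is sound and reaches the conclusion by a genuinely different route from the paper. The paper decomposes around $\widehat g_n$ rather than $g$: it writes $\eta-\widehat\eta_n = (\eta-\eta_n)+(\eta_n-\widehat\eta_n)$ with $\eta_n(\theta)=\bbE_X[\nabla_\theta\widehat g_n(\theta,X)]$, so its ``sampling'' term $\eta_n-\widehat\eta_n$ compares the population and empirical averages of $\nabla_\theta\widehat g_n$ and therefore requires a Glivenko--Cantelli argument \emph{uniform over all of} $\cG$; your SAA term involves only the fixed $g$, so an ordinary uniform LLN suffices---a cleaner choice. For the crux (upgrading $L^2$ fit of $\widehat g_n$ to gradient closeness), the paper does not invoke $C^1$-precompactness: it rewrites partial derivatives as difference quotients, uses a uniform gradient bound to apply dominated convergence, and then swaps the $n\to\infty$ and $h\to 0$ limits via Moore--Osgood. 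Your Arzel\`a--Ascoli/subsequence argument is more transparent and, importantly, makes the needed extra hypothesis explicit: both proofs ultimately need more than ``bounded $C^1$'' on $\cG$ (the paper uses a uniform gradient bound; you need equicontinuity of gradients for $C^1$-precompactness), and you are right to flag that without such regularity the $L^2\Rightarrow C^1$ step can fail. One small caveat in your write-up: the subsequence argument for convergence in probability should be spelled out carefully (pass to an a.s.-$L^2$-convergent subsequence first, then use precompactness $\omega$-by-$\omega$), and the identification of the $C^1$ limit with $g$ needs the support of $P_\theta\times P_X$ to be dense in $\Theta\times\cX$ or a similar condition.
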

\begin{proof}[Proof of Theorem~\ref{thm:consistency-of-gradient}] Please see in Appendix~\ref{sec:app-proof}.\end{proof}

Theorem~\ref{thm:consistency-of-gradient} suggests that,\textbf{\emph{with sufficient data}}, ETD is guaranteed to generate reliable estimation of the true gradient. In particular, \textbf{\emph{the average estimation error can be arbitrarily small}} with a sufficiently large offline dataset. We will \textbf{\emph{empirically validate this on both simulation and real-world applications}} in Section~\ref{sec:exp}. Notably, this result remains valid when ETD uses the data $\mathcal{D}_x$ in the offline dataset to estimate the gradient, avoiding the need for sample splitting~\citep{hansen2000sample}.

\section{Scarce-Data Regime: Deep Gradient Interpolation (DGI)}\label{sec:small-data}
In many practical cases, there may not be enough data to meet the requirements of the asymptotic guarantees in Section~\ref{sec:large-data}. In these scarce-data scenarios, it is beneficial to incorporate more inductive bias to aid learning. To this end, we present \emph{Deep Gradient Interpolation} (\textbf{DGI}),  which directly interpolates the true gradient from the offline dataset. 

Specifically, consider a learner (e.g., neural network) $h_\phi(\theta,x): \Theta \times \mathcal{X} \rightarrow \bbR^d$ where $\phi$ denotes its parameters. The objective of $h_\phi$ is to estimate $\nabla g$, that is, $h_\phi(\theta,x) \approx \nabla_{\theta,x}g(\theta,x)$. Motivated by the \emph{Fundamental Law of Calculus} ~\citep{sobczyk2011fundamental} and the \emph{Clairaut's Theorem}~\citep{aso1991generalization}, we propose the following learning principles for $h_\phi$:
\begin{itemize}
    \item[(I)] \textbf{\emph{Balance Equations}}. $h_\phi(\theta,x)$ outputs the gradient of a function, thus it must be a conservative field and satisfy a set of balance equations. See Section~\ref{sec:loss1}. 
    \item[(II)] \textbf{\emph{Reconstruction}}. Path integral of $h_\phi(\theta,x)$ must estimate the difference between values of $g$ at two ending points. See Section~\ref{sec:loss2}.
    \item[(III)] \textbf{\emph{Path Independence}}. Given that $h_\phi$ is a conservative field, the values of the path integrals of $h_\phi$ are independent of the chosen paths. See Section~\ref{sec:loss3}.
\end{itemize}

Notably, in our experimental study, we verify that all three training objectives significantly contribute to the performance. Next, we elaborate on each of the principles in sequence. 

\subsection{Balance Equations}\label{sec:loss1}
Let $\Theta \in \bbR^{d_\theta}, \cX \in \bbR^{d_x}$. For notational simplicity, let $\zeta = (\theta,x) \in \bbR^d$ where $d=d_\theta+d_x$. By the Clairaut's Theorem, if $h_\phi(\zeta)$ is the gradient of a second-order continuously differentiable function, then for any $x \in \mathcal{X}$, $\theta \in \Theta$, it needs to satisfy the following equation(s):
\begin{equation}\label{eqn:eqn_1}
\begin{aligned}
     & \frac{\partial h_\phi^{j}(\zeta)}{\partial \zeta^i} = \frac{\partial h_\phi^i(\zeta)}{\partial \zeta^j}; \quad\text{ for all $i,j \in [d]$},\\
\end{aligned}
\end{equation}
where $h_\phi^j$ is the $j$-th output of $h_\phi$, and $\zeta^i$ is the $i$-th element of $\zeta$. To motivate $h_\phi$ to satisfy~\eqref{eqn:eqn_1}, we propose optimizing the following data-dependent surrogate loss:
\begin{equation}
    L_b(\phi) = \bbE_{\zeta}\bbE_{i,j}\left(\frac{\partial h_\phi^j(\zeta)}{\partial \zeta^i} -\frac{\partial h_\phi^i(\zeta)}{\partial \zeta^j}\right)^2,
\end{equation}
where $i,j$ in the expectation follow a uniform distribution on the finite set $\{1,\dots,d\}$. Note that for any $\phi$ such that $L_b(\phi)=0$, we have $h_\phi$ satisfy the balance equations~\eqref{eqn:eqn_1} almost surely. $L_b$ can be unbiasedly estimated through the given offline dataset $\mathcal{D}$ as: 
\begin{equation}\label{eqn:balance-loss-dataset}
    \widehat{L_b}(\phi) = \frac{1}{n}\sum_{k=1}^n\bbE_{i,j}\left(\frac{\partial h_\phi^j(\zeta_k)}{\partial \zeta^i} -\frac{\partial h_\phi^i(\zeta_k)}{\partial \zeta^j}\right)^2,
\end{equation}
where $\zeta_k = (\theta_k,x_k) \in D$.

\subsection{Reconstruction}\label{sec:loss2}
By the Fundamental Law of Calculus, as $h_\phi$ estimates $\nabla g$, the integral of $h_\phi(\theta,x)$ should estimate changes of $g$. Specifically, for any two pairs of $(\theta_1,x_1), (\theta_2,x_2) \in \Theta\times\cX$, we should have 
\begin{equation}\label{eqn:integral-req}
\begin{aligned}
g(\theta_1,x_1)-g(\theta_2,x_2) &= \int^{\theta_1,x_1}_{\theta_2,x_2}h_\phi(\Tilde{\theta},\Tilde x)d\Tilde{\theta}d\Tilde x,
\end{aligned}
 \end{equation}
where the integral is the path integral from $ (\theta_2,x_2)$ to $(\theta_1,x_1)$. Motivated by this, we propose the following training objective function for $h_\phi$:
\begin{equation*}
    L_r(\phi) = \bbE_{Z_1,Z_2}\left[\left(Y_1-Y_2-\int_{\theta_2,X_2}^{\theta_1,X_1}h_\phi(\Tilde{\theta},\Tilde x)d\Tilde{\theta}d\Tilde x\right)^2\right],
\end{equation*}
where $Z_1=(\theta_1,X_1,Y_1)$ and $Z_2=(\theta_2,X_2,Y_2)$ are independent tuples both following the data distribution described in Section~\ref{sec:intro}. Note that the integral of $h_\phi$ in $L_r(\phi)$ is \emph{not path-independent} unless it is a conservative field. To let the integral be well-defined, for now, we assume the linear path from $ (\theta_2,x_2)$ to $(\theta_1,x_1)$ as the integral's path. In the next section, we will consider more complex path schemes for learning.  The soundness of $L_r$ is due to the fact that minimizers of $L_r$ satisfy the integral requirement in~\eqref{eqn:integral-req} almost surely, as demonstrated by the following result. 
\begin{lemma}\label{lemma:validity-of-recon-loss}
Consider the data generation process $Y = g(\theta,X) + \epsilon$ where $\epsilon$ is independent of $(\theta,X)$ and $\bbE[\epsilon]=0,\mathrm{Var}[\epsilon^2]=\sigma^2$. 
Then we have $\min_{\phi'}L_r(\phi')=2\sigma^2$. Moreover, if $L_r(\phi)=2\sigma^2$, then for any two random pairs of $(\theta_1,x_1), (\theta_2,x_2) \in \Theta\times\cX$ following the data distribution, 
\begin{align*}
   g(\theta_1,x_1)-g(\theta_2,x_2) = \int_{\theta_2,x_2}^{\theta_1,x_1}\nabla_{\theta,x}h_\phi(\Tilde{\theta},\Tilde x)d\Tilde{\theta} d\Tilde x
\end{align*}
almost surely.
\end{lemma}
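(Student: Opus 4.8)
The plan is to exploit the additive, signal-plus-independent-noise structure of the data so that $L_r$ splits into an optimizable ``signal'' term and an irreducible noise floor. First I would substitute $Y_1 - Y_2 = \big(g(\theta_1,X_1)-g(\theta_2,X_2)\big) + (\epsilon_1-\epsilon_2)$ into the definition of $L_r$. Introducing the shorthand $I_\phi$ for the linear-path integral $\int_{\theta_2,X_2}^{\theta_1,X_1} h_\phi\,d\theta\,dx$ appearing in the loss, together with the ``signal residual'' $R := g(\theta_1,X_1)-g(\theta_2,X_2) - I_\phi$ and the ``noise residual'' $E := \epsilon_1 - \epsilon_2$, the squared term inside the expectation becomes exactly $(R+E)^2$. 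The crucial structural observation is that $R$ is a deterministic function of the endpoints $(\theta_1,X_1,\theta_2,X_2)$ alone, whereas $E$ depends only on the noise variables; since $\epsilon_1,\epsilon_2$ are independent of $(\theta,X)$ and of each other with mean zero, $E$ is independent of $R$ and has $\bbE[E]=0$.

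Expanding the square gives $L_r(\phi) = \bbE[R^2] + 2\,\bbE[RE] + \bbE[E^2]$. I would eliminate the cross term by conditioning on the endpoints: $\bbE[RE] = \bbE\big[R\,\bbE[E\mid \theta_1,X_1,\theta_2,X_2]\big] = \bbE[R\cdot 0] = 0$. The noise floor evaluates to $\bbE[E^2] = \bbE[\epsilon_1^2] - 2\bbE[\epsilon_1]\bbE[\epsilon_2] + \bbE[\epsilon_2^2] = 2\sigma^2$, using independence and zero mean of the $\epsilon_i$. Hence $L_r(\phi) = \bbE[R^2] + 2\sigma^2 \ge 2\sigma^2$ for every $\phi$, which is the lower bound. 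To see the bound is attained, take $h_\phi = \nabla_{\theta,x} g$: because a gradient field is conservative, the gradient theorem (fundamental theorem of line integrals) yields $I_\phi = g(\theta_1,X_1)-g(\theta_2,X_2)$ along the linear path, so $R\equiv 0$ and $L_r = 2\sigma^2$. This establishes $\min_{\phi'} L_r(\phi') = 2\sigma^2$.

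For the second claim, suppose $L_r(\phi) = 2\sigma^2$. By the above decomposition this forces $\bbE[R^2]=0$, and a nonnegative random variable with zero expectation is zero almost surely. Therefore $R=0$, i.e. $g(\theta_1,X_1)-g(\theta_2,X_2) = I_\phi$ holds almost surely with respect to the joint law of $(\theta_1,X_1,\theta_2,X_2)$, which is precisely the asserted reconstruction identity (the expectation over the noise having already been discharged).

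I expect the expansion itself to be routine; the one genuine subtlety is the attainability half together with the precise role of conservativity. The integral $I_\phi$ is defined along a fixed linear path, so $h_\phi$ need \emph{not} be conservative for $L_r$ to be well-defined or for the lower bound to hold — conservativity enters only to certify that the specific candidate $\nabla_{\theta,x}g$ drives the signal term to zero via path-independence. I would therefore make explicit the standing assumption that $\nabla_{\theta,x}g$ lies in (or is approximated arbitrarily well by) the hypothesis class of $h_\phi$, so that the infimum $2\sigma^2$ is actually a minimum, and I would emphasize that the ``almost surely'' conclusion is taken over the design and stochasticity variables only.
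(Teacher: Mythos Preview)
Your proposal is correct and follows essentially the same route as the paper's proof: substitute $Y_i=g(\theta_i,X_i)+\epsilon_i$, expand the square into a signal term, a vanishing cross term (by independence and $\bbE[\epsilon]=0$), and the noise floor $2\sigma^2$, then invoke $h_\phi=\nabla g$ for attainability and the ``$\bbE[R^2]=0\Rightarrow R=0$ a.s.'' argument for the second claim. If anything, your treatment of the cross term via conditioning and your explicit remark on where conservativity is (and is not) needed are cleaner than the paper's exposition.
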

\begin{proof}[Proof of Lemma~\ref{lemma:validity-of-recon-loss}]Please see in Appendix~\ref{sec:app-proof}.\end{proof}

Given a dataset $\mathcal{D}=\{\theta_k,x_k,y_k\}^n_{k=1}$ which consists of iid tuples of $(\theta,X,Y)$, $L_r(\phi)$ can be unbiasedly estimated as
\begin{equation*}
    \widehat{L_r}(\phi) = \frac{1}{n^2}\sum_{k,k' \in [n]}\left(y_k-y_{k'}-\int^{\theta_{k'},x_{k'}}_{\theta_k,x_k}h_\phi(\Tilde{\theta},\Tilde x)d\Tilde{\theta}d\Tilde x\right)^2.
\end{equation*}

\subsection{Path Independence}\label{sec:loss3}
As $h_\phi$ must be a conservative field, the value of the path integral in~\eqref{eqn:integral-req} should be independent of the selected path. Specifically, for any pair of $\zeta_1 = (\theta_1,x_1), \zeta_2 = (\theta_2,x_2) \in \Theta\times\cX$, let $\rho(\zeta_1,\zeta_2)$ denote a set of smooth curves with end points $\zeta_1$ and $\zeta_2$. Then, the path integral of $h_\phi$ along any path $r \in \rho$ should generate the same result. To this end, we propose the following objective function to assist $h_\phi$ to better capture this desired behavior: given the offline dataset $\mathcal{D}=\{\zeta_k=(x_k,\theta_k),y_k\}_{k=1}^n$,
\begin{equation}\label{eqn:recon-loss-sup}
    L_e(\phi) = \frac{1}{n^2}\sum_{k,k' \in [n]} \sup_{r \in \rho(\zeta_k,\zeta_{k'})}\left( y_k-y_{k'}-\int_{r }h_\phi(\zeta)d\zeta \right)^2.
\end{equation}
Here, compared to $L_r(\phi)$, we consider the \emph{worst-case} estimation error over the set of all smooth paths in $\rho$. Such an objective will enforce $h_\phi$ to optimize over all paths, leading to integral results that are close to path-independent. As $\sup$ over the infinite dimensional space $\rho$ is intractable, in every training step (update of $\phi$), we randomly sample $\Lambda$ paths $\{r^{k,k'}_\lambda\}^\Lambda_{\lambda=1}$ for every path set $\rho(\zeta_k,\zeta_{k'})$ in~\eqref{eqn:recon-loss-sup} and optimize 
\begin{equation}\label{eqn:recon-loss-sup-discrete}
    \widehat{L_e}(\phi) = \frac{1}{n^2}\sum_{k,k' \in [n]} \max_{r \in \{r^{k,k'}_\lambda\}^\Lambda_{\lambda=1}}\left( (y_k-y_{k'})-\int_{r }h_\phi(\zeta)d\zeta \right)^2.
\end{equation}

\textbf{Path Parametrization and Sampling.}
The success of the proposed algorithm relies on a careful choice of path space $\rho$ as well as the sampling scheme used to discretize the optimization problem. In this work, we choose $\rho$ to be the space of paths parameterized by polynomials as any smooth trajectory (which is an analytical function) can be approximated arbitrarily well with the class of polynomial functions. Please refer to Appendix~\ref{sec:path-param} for details.
\subsection{Deep Gradient Interpolation (DGI)}
Here, we provide a brief overview of the proposed algorithm DGI whose pseudocode is presented in Algorithm~\ref{algo:dgx} (in the appendix). 

{\bf Training.} Given the offline dataset $\{x_k,\theta_k,y_k\}^n_{k=1}$ and a neural network $h_\phi: \Theta\times\cX\rightarrow \bbR^d$, DGI optimizes the following objective for $h_\phi$: 
$$\phi_n \in \argmin_{\phi}L(\phi) := \widehat{L_e}(\phi) + \alpha\cdot\widehat{L_b}(\phi),$$
where $\widehat{L_e}(\phi)$ and $\widehat{L_b}(\phi)$ are respectively defined in~\eqref{eqn:recon-loss-sup-discrete} and~\eqref{eqn:balance-loss-dataset}, balanced by the hyper-parameter $\alpha \ge 0$. 

{\bf Inference.} During inference, with the samples of $X$ in the offline dataset $\mathcal{D}_x = \{x_k\}^n_{k=1}$, the gradient of the true loss function $\nu(\theta)$ can be estimated as
\begin{equation}
    \widehat{\nabla_{\theta}\nu_n}(\theta) = \frac{1}{n}\sum_{k=1}^{n} h^{[\theta]}_{\phi_n}(x_i,\phi),
\end{equation}
where $h^{[\theta]}_{\phi_n}$ is the output of the first $d_\theta$ dimensions of $h_{\phi_n}$ that represent the estimation of  $\nabla_{\theta}g(\theta,x)$.

\section{Experiments}\label{sec:exp}
We empirically validate the efficacy of our proposed methods for both large and scarce-data regimes through experiments on a broad range of simulation problems and real-world applications. Specifically, we aim to answer the following questions:
\begin{enumerate}
    \item Can our proposed methods (ETD and DGI) improve over random search and the best design in the offline dataset? 
    (Section~\ref{sec:exp-perf})
    \item Can ETD reliably estimate the gradient in large-data regime? Can DGI provide improved gradient estimation over ETD in scarce-data regime? Are all components of DGI helpful?
    (Section~\ref{sec:exp-grad})
    \item How do the various hyperparameters of DGI impact its performance? How much computational resource does DGI require?
    (Section~\ref{sec:exp-ablation})
\end{enumerate}

In the sequel, \textbf{\emph{we provide affirmative answers to all the above questions}}, and corroborate that \textbf{\emph{DGI is both computationally efficient and robust to hyperparameter choices}}. Due to space limitations, we present a selected subset of results here and defer the full set of results to  Appendix~\ref{sec:app-full-results}. First, we provide a brief overview of the optimization tasks and evaluation protocols used in this section.

\textbf{Optimization Tasks.} The objective functions used in the \textbf{\emph{simulation problems}} are meticulously selected to encompass a broad spectrum of functions with diverse physical properties and shapes. Specifically, in addition to the commonly used \emph{linear} and \emph{quadratic functions} for evaluation,  we include \emph{functions with multiple local minima} (e.g., Ackley, Griewank), \emph{bowl-shaped functions} (e.g., Trid, Perm), \emph{plate-shaped functions} (e.g., Zakharov), and \emph{valley-shaped functions} (e.g., Dixon-Price, Rosenbrock). Moreover, considering neural networks as universal approximators~\citep{scarselli1998universal}, we include \emph{objective functions represented by neural networks of varying scales}, referred to as \emph{NN small} and \emph{NN Large} in the following sections. 
In terms of \textbf{\emph{real-world applications}}, we first focus on two widely studied physics-related scenarios: the welded beam design problem~\citep{deb1991optimal} and the pressure vessel design problem~\citep{moss2004pressure}. Furthermore, we extend our studies to three additional tasks: the continuous newsvendor model~\citep{khouja1999single}, the M/M/1 queue system~\citep{cheng1999improved}, and the stochastic activity network~\citep{avramidis1996integrated}. See Appendix~\ref{sec:data-gen} for details of the aforementioned functions and applications. 
Each training dataset in the scarce-data regime contains $128$ samples, whereas those for the large-data regime may contain up to $50$k samples. Due to space constraint, see Appendix~\ref{sec:data-gen} for details on the data generating process. 

\begin{table}[h]
\caption{Performance for Scarce-Data Regime. \textbf{\emph{Lower}} value indicates \textbf{\emph{better}} performance.} \label{tab:scarce-main}
\begin{center}
\begin{tabular}{lllll}
\textbf{}  &\textbf{Zak} &\textbf{Perm} &\textbf{Rose} &\textbf{Trid}\\
\hline 

RS  &0.4161&2.6436&1.2697&0.8621 \\
\hdashline
ETD(R) &0.2174&0.6337&0.7978&0.1034 \\
ETD(G)  &0.2174&0.6238&0.8539&0.1092 \\
DGI(R) &0.0745&0.3267&0.5506&0.1034 \\
DGI(G) &0.0745&0.3267&0.5506&0.1034 \\
\hdashline
OC  &0.0248&0.5083&0.1798&0.2184 \\





\end{tabular}
\end{center}
\end{table}
\begin{table}[h]
\caption{Performance for Large-Data Regime. Tasks include Quadratic (\textbf{\emph{QR}}), Ackley (\textbf{\emph{AK}}), Beam (\textbf{\emph{BM}}), and Vessel (\textbf{\emph{VL}}). \textbf{\emph{Lower}} value indicates \textbf{\emph{better}} performance.} 
\label{tab:large-main}
\begin{center}
\begin{tabular}{rrrrrr}
\textbf{}  &\textbf{QR}  &\textbf{AK}&\textbf{BM} &\textbf{VL} \\
\hline 

RS  &814.0&0.4067&431.0&1849 \\
\hdashline
ETD(R) &207.0&0.0314&77.0&274.0 \\
ETD(G)  &190.0&-0.2939&85.0&263.0 \\
DGI(R) &31.0&-0.1756&0.0&218.0 \\
DGI(G) &26.0&-0.3068&0.0&221.0 \\
\hdashline
OC  &6.1&-0.2976&7.0&0.0 \\

\end{tabular}
\end{center}
\end{table}

\textbf{Evaluation Protocols.} Experiments for the scarce-data regime are repeated $50$ times with varying randomly sampled training datasets. For the large-data regime, $20$ random experiments are conducted. The results are averaged across datasets. As the black-box objectives functions are known for all the optimization tasks in consideration, the true objective values and gradients can be accurately estimated through computational approaches (Appendix~\ref{sec:exp-detail}).
\begin{figure*}[!h]
\label{fig:sim-large1}
\centering
\begin{subfigure}[b]{0.6\textwidth}
   \includegraphics[width=\textwidth]{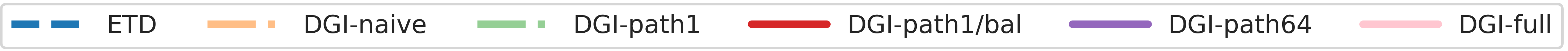}
    \label{fig:sub1}
    \vspace{-1em}
\end{subfigure}
\hfill
\\
\begin{subfigure}[b]{0.8\textwidth}
     \includegraphics[width=\textwidth]
    {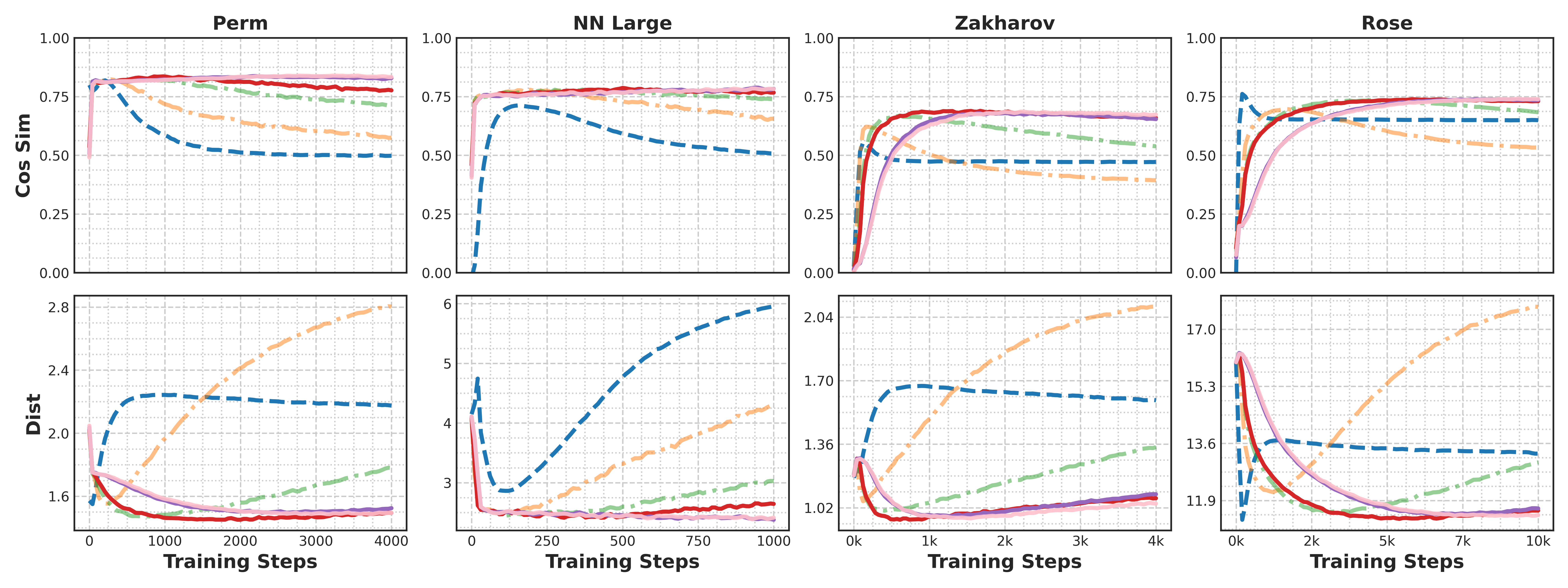}
    \label{fig:sub1}
\end{subfigure}
\hfill
\caption{Gradient Estimation for Scarce-Data Regime. Top figures show cosine similarity (\textbf{\emph{higher}} is better); gradient norm distance (\textbf{\emph{lower}} is better) in bottom figures.}
\label{fig:sim-scarce}
\end{figure*}
\begin{figure}[!h]
\centering
\begin{subfigure}[b]{0.33\textwidth}
    \includegraphics[width=\textwidth]
    {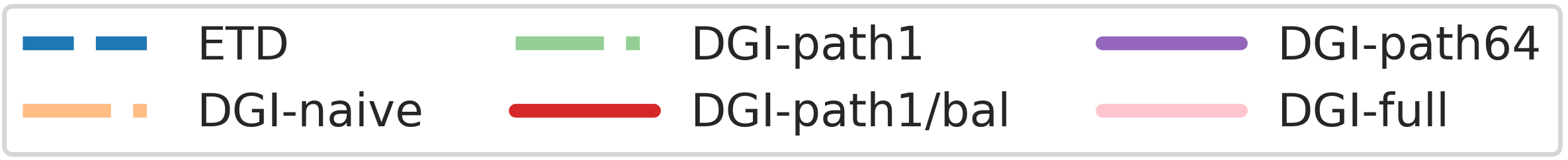}
    \label{fig:sub1}
    \vspace{-1em}
\end{subfigure}
\hfill
\\
\begin{subfigure}[b]{0.36\textwidth}
    \includegraphics[width=\textwidth]{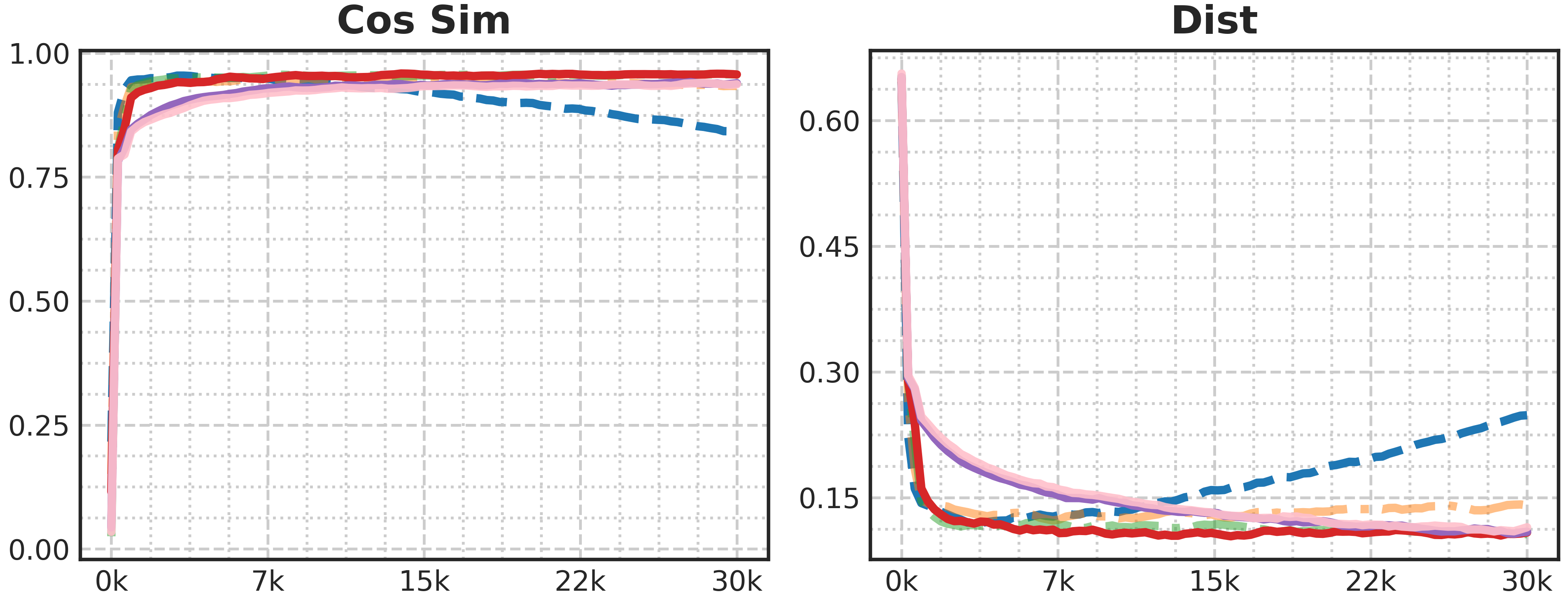}
    \label{fig:sub1}
    \vspace{-1em}
\end{subfigure}
\hfill
\begin{subfigure}[b]{0.36\textwidth}
    \includegraphics[width=\textwidth]{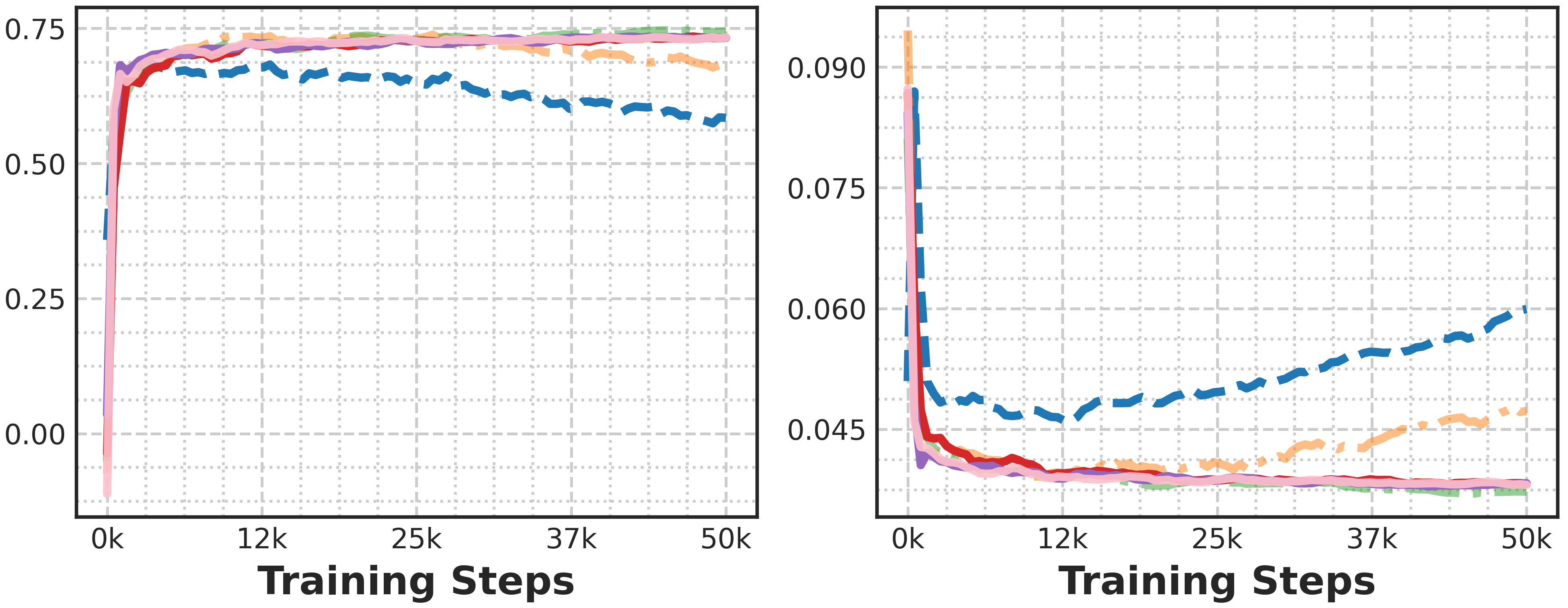}
    \label{fig:sub1}
\end{subfigure}
\caption{Gradient Estimation for Large-Data Regime. Cosine similarity and norm distance on \textbf{\emph{Quadratic}} (top), 
and \textbf{\emph{Griewank}} (bottom) objective functions.}
\label{fig:sim-large1}
\end{figure}
\begin{figure*}[!h]
\centering
\begin{subfigure}[b]{0.32\textwidth}
    \includegraphics[width=\textwidth]{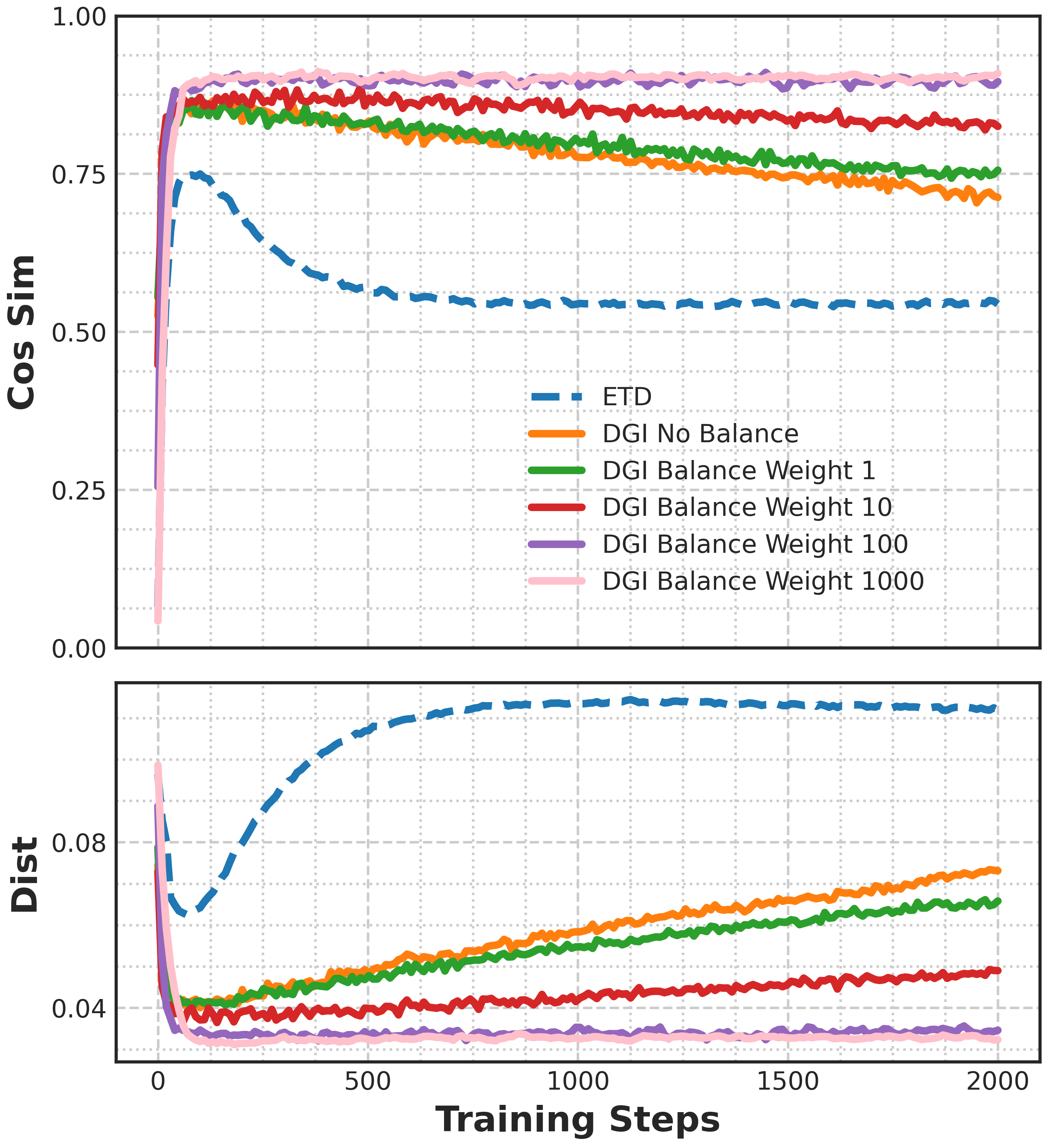}
    \label{fig:sub1}
\end{subfigure}
\hfill
\begin{subfigure}[b]{0.32\textwidth}
    \includegraphics[width=\textwidth]
    {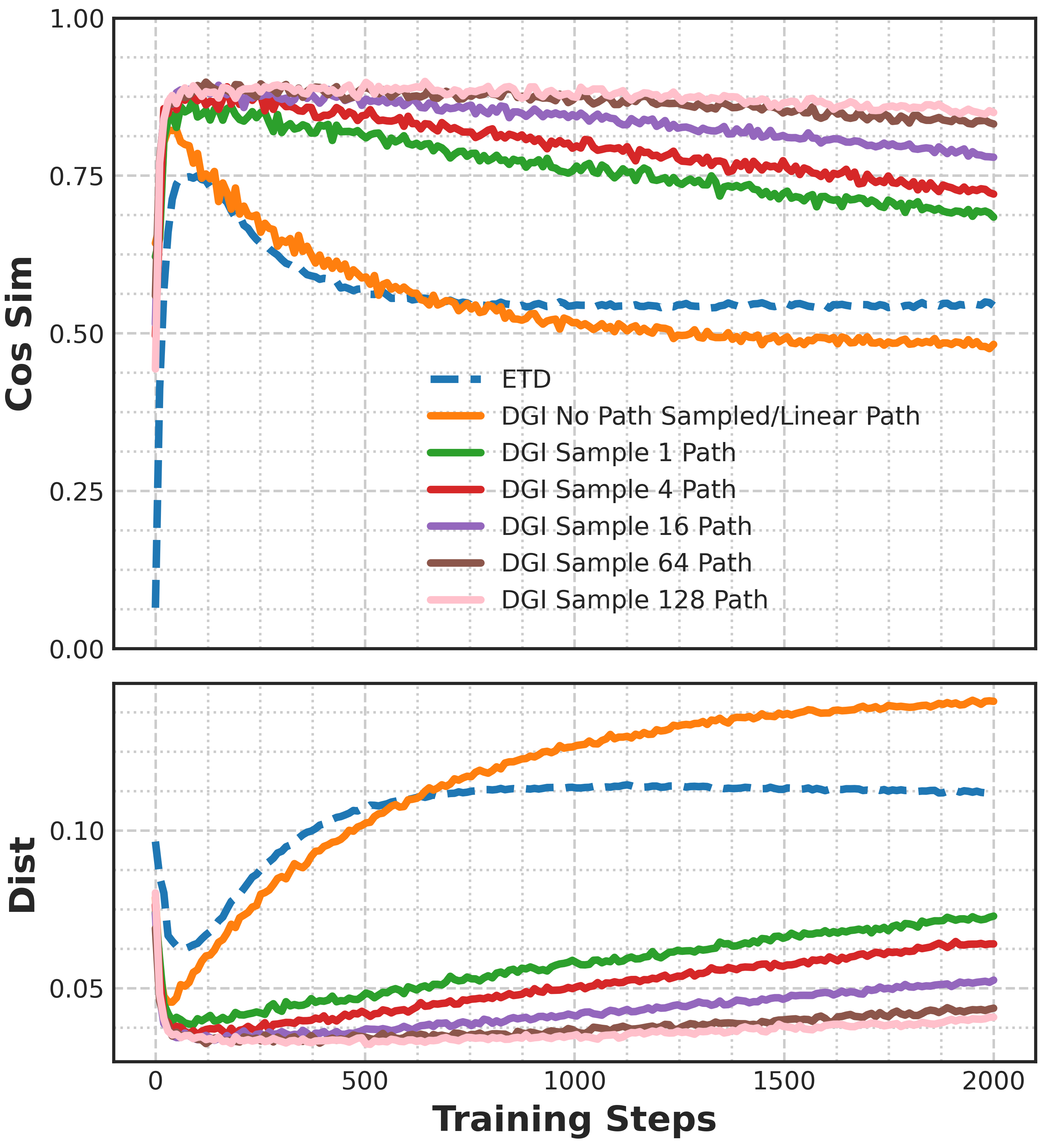}
    \label{fig:sub1}
\end{subfigure}
\hfill
\begin{subfigure}[b]{0.3275\textwidth}
    \includegraphics[width=\textwidth]{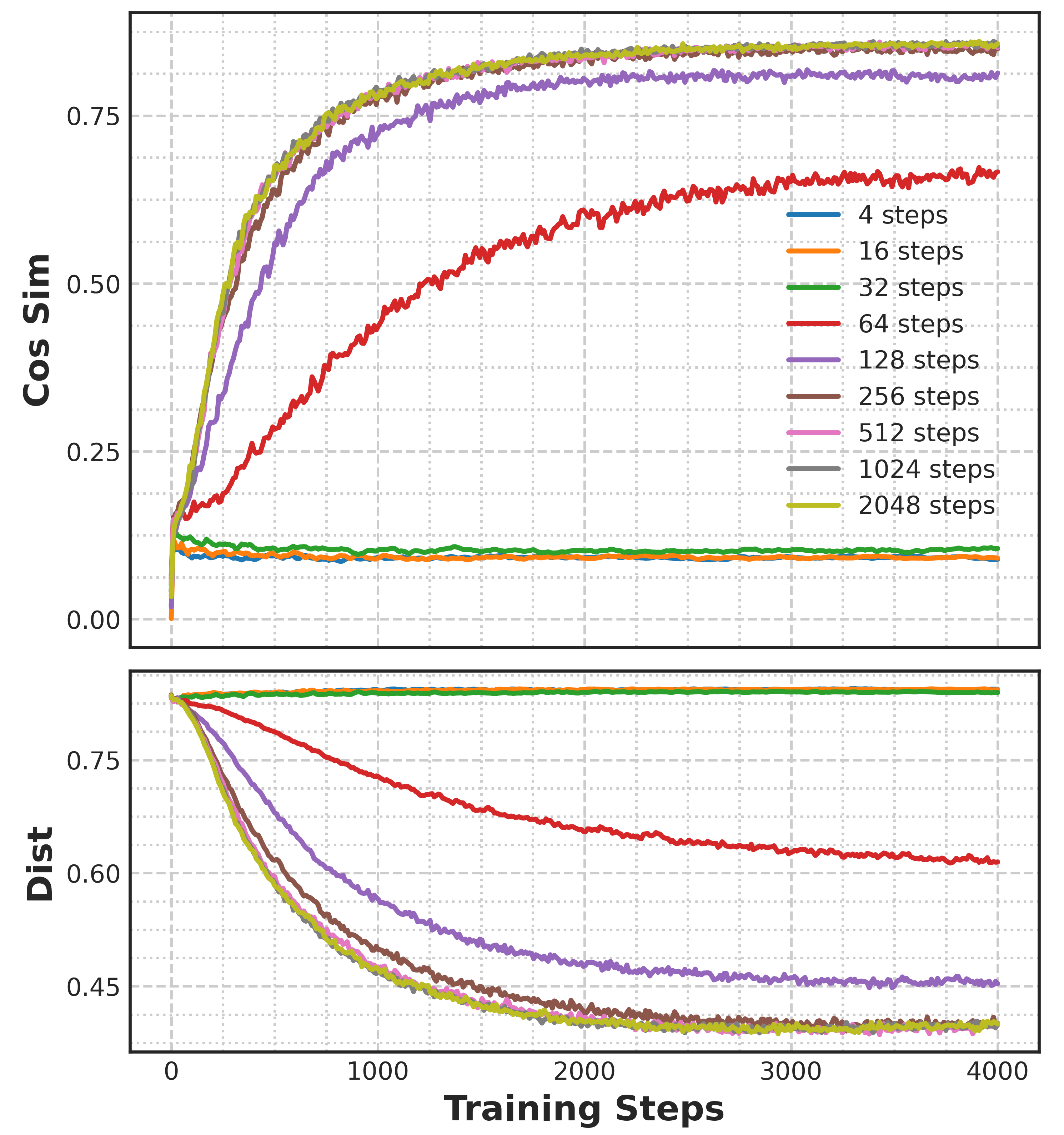}
    \label{fig:sub1}
\end{subfigure}
\caption{Ablation studies on the effects of the \textbf{\emph{balance weight}} (left), \textbf{\emph{number of sampled paths}} (middle), and \textbf{\emph{integration accuracy}} \emph{measured by the integration steps} (right) on the performance of gradient estimation.}
\label{fig:ablation}
\end{figure*}
\subsection{Optimization Performance}\label{sec:exp-perf}
We investigate the effectiveness of our proposed approach in optimizing the (unknown) value function $\nu(\theta)$. We compare the following methods: \textbf{\emph{Oracle in the Dataset (OC)}}: We report the value of the best design parameter in the training dataset. (Note that this value \emph{\textbf{cannot be achieved in practice}} because the true value $\nu(\theta)$ cannot be observed); \textbf{\emph{Random Search (RS)}} randomly samples $M$ design parameters $\theta$; \textbf{\emph{ETD}} and \textbf{\emph{DGI}}: We consider two variations to initialize gradient descent: to randomly select a $\theta$ in the offline dataset (denoted by \textbf{\emph{R}}) and to greedily choose $\theta$ in the dataset with the smallest observed value of $y$ (denoted by \textbf{\emph{G}}).

We give each method $M=128$ attempts and report the average ground true value. We choose this metric because the true value of SOBBO cannot be observed in practice, so it is crucial that an approach generates consistent good designs. We use gradient descent with the true black-box objective function to estimate an optimized value and use it to normalize the performance metrics of all methods through $(y-x)/|x|$ where $x$ is the estimated optimized value and $y$ is the performance metric of algorithms. The results are presented in Table~\ref{tab:scarce-main} and~\ref{tab:large-main}. It can be observed that the proposed methods significantly outperform the baseline of random search. Moreover, it even outperforms the (unobtainable) dataset oracle in most scenarios.



\subsection{Gradient Estimation}\label{sec:exp-grad}
Given the observed significant improvements over baselines, we conduct an in-depth evaluation of \textbf{\emph{how effectively ETD and DGI estimate the gradient}} of black-box objective functions, which is their primary task. Moreover, we investigate the effectiveness of DGI's individual components.

In particular, since in real-world scenarios the true gradient is not accessible for evaluation, there is no clear signal to indicate when to halt the training process. The convergence of training loss does not necessarily correspond to optimal gradient estimation, particularly in the scarce-data regime. Instead, it may indicate overfitting to noise, resulting in poor gradient estimation. (See Appendix~\ref{sec:exp-overfit} for a comparison of gradient performance and training loss). Therefore, \textbf{\emph{competitive methods must maintain accurate gradient estimates throughout the training process}}.

{\bf Metrics.} We use two metrics to evaluate the gradient estimation performance. Let $\psi$ be the true gradient and $\widehat{\psi}$ the estimation,
\begin{itemize}
\setlength\itemsep{-0.2em}
    \item Cosine Similarity: \textbf{Cos Sim}$\left(\psi, \widehat{\psi}\right) = \frac{\|\psi - \widehat{\psi}\|_2^2}{\|\psi\|_2\|\widehat{\psi}\|_2}$;
    \item Norm Distance: \textbf{Dist}$\left(\psi, \widehat{\psi}\right) = \|\psi-\widehat{\psi}\|_2$.
\end{itemize}
These two metrics together evaluate the accuracy of both the \textbf{\emph{direction}} and \textbf{\emph{magnitude}} of the estimated gradients.  
The evaluation is averaged over $1000$ randomly sampled designs (following the same data distribution of $\theta$) to have a comprehensive evaluation.

{\bf Baselines.} To assess the effectiveness of the proposed training objectives for DGI, we compare ETD with five variations of DGI: (i) \textbf{\emph{DGI-full}} is the complete DGI with balancing loss and path independence loss optimized with 64 sampled paths; (ii) \textbf{\emph{DGI-naive}} is without balancing loss and path independence loss (trained with only estimation loss on the linear path integral); (iii) \textbf{\emph{DGI-path1}} and (iv) \textbf{\emph{DGI-path64}} are trained with only path independence loss optimized with $1$ and $64$ sampled path integral, respectively; and (v) \textbf{\emph{DGI-path1/bal}} is trained with balancing loss and path independence loss with $1$ sampled path.

{\bf Results.} The experimental results are presented in Figure~\ref{fig:sim-scarce} and~\ref{fig:sim-large1}. In scarce-data regimes, we observe that ETD demonstrates transient gradient estimation quality across all tested optimization problems: its performance quickly degrades during the training process, providing little opportunity to determine the correct stopping time. In contrast, 
DGI with balancing loss or path sampling converges to superior gradient estimation and maintains as training progresses. Notably, DGI without these two components demonstrates similar behavior to that of ETD, demonstrating the importance of these components. 
On the other hand, the results in Figure~\ref{fig:sim-large1} indicate that the discrepancy between DGI-full and other methods is reduced to a certain degree in the large-data regime. 

\subsection{Ablation Studies}\label{sec:exp-ablation}
We investigate the impact of three key hyperparameters on the performance of DGI: \textbf{\emph{(i) weight for balancing loss}}, \textbf{\emph{(ii) number of sampled paths}} for the path independence loss, and \textbf{\emph{(iii) integration accuracy}}. Additionally, we find that \textbf{\emph{DGI is considerably more robust to noise}}. 
Due to space constraint, we defer this result and ablation setting to Appendix~\ref{sec:exp-more}. 

The results are presented in Figure~\ref{fig:ablation}. Specifically, to evaluate the independent effect of the balance loss, we fix the number of sampled paths to $1$ and vary the weight for balance loss across values in $\{0, 1, 10, 100, 1000\}$. The results indicate that increasing the balance weight significantly improves gradient estimation accuracy and stabilizes performance. To isolate the effect of path sampling, we set the balance weight to zero and vary the number of sampled paths in $\{0\text{(only linear path)}, 1, 4, 16, 64, 128\}$. Similar to the effects of increasing the balance weights, increasing the number of sampled paths improves gradient estimation and performance stability. 
As a critical component of the DGI algorithm, integration is a typical computational bottleneck. 
To this end, our experiments show that \textbf{\emph{DGI achieves strong performance without requiring high integration accuracy}}: It performs well with only a moderate number of integration steps, efficiently preserving gradient information.

\section{Conclusion}\label{sec:dis}
We presented two approaches for optimizing black-box objective functions involving uncertainties using historical data. ETD is simple to implement and theoretically sound for large-data regimes, while DGI, though more computationally expensive, provides better gradient estimation, stable convergence, and noise robustness. 
The trade-off between performance and computational cost in the large-data regime is an interesting question and we will explore this in future work.

\section*{Acknowledgment}
The work of Hamid Jafarkhani, Vahid Tarokh and Ali Pezeshki was respectively supported in part by the collaborative NSF Awards CNS-2229467, CNS-2229468, and CNS-2229469.

\newpage
\appendix
\onecolumn
\hrule height4pt
\vskip .25in
\begin{center}
    {\Large\bfseries Supplementary Materials}
\end{center}
\vskip .20in
\hrule height1pt
\vskip .40in
\section{Proofs of Theoretical Results}\label{sec:app-proof}

\textbf{Theorem~\ref{thm:consistency-of-gradient}}
\textit{
Let $\cG$ be a function class of bounded continuously differentiable functions such that $g \in \cG$. Assume that $\Theta$ and $\cX$ are compact. Let $\widehat g_n \in \cG$ be the solution to the Empirical Risk Minimization problem defined in~\eqref{eqn:obj-erm}, where $n$ is the total number of training samples in the offline dataset. Let 
$$
\eta(\theta) = \nabla_{\theta}\bbE_X[g(\theta,X)]\;\;\text{and}\;\;\widehat{\eta}_n(\theta) = \frac{1}{n}\sum^n_{i=1}\nabla_{\theta}\widehat g_n(\theta,X_i) 
$$
respectively denote the true gradient and the gradient estimate using $\widehat g_n$ and the offline dataset $\mathcal{D}$, then 
\begin{align*}\bbE_\theta\bigg[\big\|\eta(\theta)-\widehat{\eta}_n(\theta)\big\|\bigg]  \stackrel{P}{\rightarrow} 0,
\end{align*}
where $\stackrel{P}{\rightarrow}$ denotes convergence in probability. 
}

\begin{proof}[Proof of Theorem~\ref{thm:consistency-of-gradient}]
In this proof, we will use $f$ to denote specific functions in $\cG$ and $g$ to denote the true objective function. Consider the loss function 
$$
\cR(f) = \bbE_{\theta,X,Y}[(f(\theta,X)-Y)^2],
$$
where the joint distribution of $(\theta,X,Y)$ follows the data generating process described in Section~\ref{sec:intro}. Note that the optimum $f^*$ is obtained at 
$$
f^*(\theta,x) = \bbE[Y|\theta,X=x] = \bbE[g(\theta,x)+\epsilon|\theta,X=x] = g(\theta,x).
$$
Moreover, $\cR$ can be decomposed as
\begin{align*}
    \cR(f) &= \bbE_{\theta,X,Y}[(f(\theta,X)-Y)^2] = \bbE_{\theta,X,\epsilon}[(f(\theta,X)-g(\theta,X)+\epsilon)^2] \\
    & = \bbE_{\theta,X}[(f(\theta,X)-g(\theta,X))^2] + 2\bbE_{\theta,X}[f(\theta,X)-g(\theta,X)]\bbE[\epsilon] + \bbE[\epsilon^2] \\
    & = \bbE_{\theta,X}[(f(\theta,X)-g(\theta,X))^2] + \sigma^2,
\end{align*}
where the last equality is due to $\bbE[\epsilon]=0$ and independence of $\epsilon$ and $\bbE_{\theta,X}[f(\theta,X)-g(\theta,X)]$. Hence, $\inf_{f \in \cF}\cR(f) = \cR(g) = \sigma^2$. By classical learning theory results~\citep{shorack2009empirical, vapnik1991principles}, if $g \in \cG$ and $\cG$ is a Glivenko–Cantelli (GC) function class (which includes the family of bounded continuous functions), we have
\begin{equation}\label{eqn:g-convergence}
    \cR(\widehat g_n) - \inf_{f \in \cF}\cR(f) \rightarrow 0 \Rightarrow \bbE_{\theta,X}[(\widehat g_n(\theta,X)-g(\theta,X))^2] \rightarrow 0.
\end{equation}
Let $\eta_n(\theta) = \bbE_{X}[\nabla_{\theta}\widehat g_n(\theta,X)]$. By triangle inequality of norm, we have
\begin{align}\label{eqn:grad-error-triangle}
    \bbE_\theta\bigg[\big\|\eta(\theta)-\widehat{\eta}_n(\theta)\big\|\bigg] \le \bbE_\theta\bigg[\big\|\eta(\theta)-\eta_n(\theta)\big\|\bigg] +  \bbE_\theta\bigg[\big\|\eta_n(\theta)-\widehat{\eta}_n(\theta)\big\|\bigg].
\end{align}
We first consider the second term in the upper bound. By Jansen's inequality, we have 
\begin{align*}
    \bbE_\theta\bigg[\big\|\eta_n(\theta)-\widehat{\eta}_n(\theta)\big\|\bigg] \le \bbE_{\theta}\left[\|\bbE_X[\nabla \widehat{g}_n(\theta,X)] -\bbE_n[\nabla \widehat{g}_n(\theta,X)]\|\right],
\end{align*}
where $\bbE_n$ denotes expectation with respect to the empirical distribution. Now consider the following function class 
$$
\cG_{\Theta}=\{f_{g,\theta}(x) \mapsto \nabla_\theta g(\theta,x) | \theta\in\Theta, g \in \cG\}.
$$
Because $\cG$ is continuously differentiable and $\cX$ is compact, we have that $\cG_\Theta$ is also a Glivenko–Cantelli (GC) function class. Thus, it holds that $\sup_{f \in \cG_\Theta}\|\bbE_X[f(X)]-\bbE_n[f(X)]\|$ converges to $0$ almost surely, which implies that as $n \rightarrow \infty$,
$$
\bbE_{\theta}\left[\|\bbE_X[\nabla \widehat{g}_n(\theta,X)] -\bbE_n[\nabla \widehat{g}_n(\theta,X)]\|\right] \rightarrow 0,
$$
almost surely. 

Now we consider the first term in the upper bound in~\eqref{eqn:grad-error-triangle}. As $L_2$ convergence implies $L_1$ convergence and with the result in~\eqref{eqn:g-convergence}, we also have $\bbE_{\theta,X}[\|\widehat g_n(\theta,X)-g(\theta,X)\|] \rightarrow 0$. Recall that 
$$
\nu_n(\theta) = \bbE_X[\widehat g_n(\theta,X)]; \quad \nu(\theta) = \bbE_X[g(\theta,X)].
$$
Thus, 
\begin{equation}
     \bbE_\theta[\|\nu_n(\theta)-\nu(\theta)\|] =  \bbE_\theta[\|\bbE_X[\widehat g_n(\theta,X)]-\bbE_X[g(\theta,X)]] \le \bbE_{\theta,X}[\|\widehat g_n(\theta,X)-g(\theta,X)\|] \rightarrow 0.
\end{equation}
As both $\nu_n$ and $\nu$ are continuous because $g,\widehat{g}_n \in \cF$ are continuous, this implies that $\nu_n(\theta)$ converges to $\nu(\theta)$ almost surely. For any $j \in [d_\theta]$, let $e_j$ denote the zero vector in $\bbR^{d_\theta}$ with only the $j$-th element as $1$: 
\begin{align*}
    \lim_{n \rightarrow +\infty}\bbE_\theta\bigg[\big\|\frac{\partial}{\partial\theta_j}(\nu_n(\theta)-\nu(\theta)\big\|\bigg] &=  \bbE_\theta\bigg[\lim_{n \rightarrow +\infty}\big\|\lim_{h \rightarrow 0_+}(\nu_n(\theta+h e_j)-\nu(\theta+h e_j)-(\nu_n(\theta)-\nu(\theta)))/h\big\|\bigg] \\
    &\le \bbE_\theta\bigg[\lim_{n \rightarrow +\infty}\lim_{h \rightarrow 0_+}\big\|\nu_n(\theta+h e_j)-\nu(\theta+h e_j)-(\nu_n(\theta)-\nu(\theta))\big\|/h\bigg]. 
\end{align*}
Now, consider the function 
$$
\beta^h_n(\theta) = \left(\nu_n(\theta+h e_j)-\nu(\theta+h e_j)-(\nu_n(\theta)-\nu(\theta))\right)/h.
$$
For all $n$ and $\theta$, the limit
$
\lim_{h \rightarrow 0_+}\|\beta^h_n(\theta)\|
$
exists as 
\begin{align*}
    \lim_{h \rightarrow 0_+} \beta^h_n(\theta) = \frac{\partial}{\partial\theta_j}(\nu_n(\theta)-\nu(\theta)) := c(\theta),
\end{align*}
and hence
\begin{align*}
\lim_{h \rightarrow 0_+} \bigg\|\bigg(\|\beta^h_n(\theta)\| - \|c(\theta)\|\bigg)\bigg\| = \lim_{h \rightarrow 0_+}\|\beta^h_n(\theta)-c(\theta)\| = 0,
\end{align*}
where the last equality comes from the fact that $c$ is the limit of $\beta^h_n(\theta)$. This implies that $\|\beta^h_n(\theta)\|$ converges to $\|c\|$. Moreover, by the assumption that $\cF$ includes functions with bounded gradients (let there be an upper bound denoted by $L$), we have
$$
\|\nabla_\theta\nu(\theta)\| = \|\bbE_{X,\theta}[\nabla_\theta g(X,\theta)]\| \le \bbE_{X,\theta}[\|\nabla_\theta g(X,\theta)\|] \le L.
$$
As $\widehat g_n \in \cF$, by the same argument,  $\|\nabla_\theta\nu_n(\theta)\| \le L$. Thus, the function $\nu_n(\theta)-\nu(\theta)$ is $2L$-bounded. Then, by the mean-value theorem, $\|\beta^h_n(\theta)\| \le 2Lh/h = 2L$.
Hence, by the dominated convergence theorem, we have
$$
\bbE_\theta\bigg[\big\|\frac{\partial}{\partial\theta_j}(\nu_n(\theta)-\nu(\theta)\big\|\bigg] \le \bbE_\theta\bigg[\lim_{h \rightarrow 0_+}\|\beta^h_n(\theta)\|\bigg] = \lim_{h \rightarrow 0_+}\bbE_\theta\bigg[\|\beta^h_n(\theta)\|\bigg].
$$
In particular, for any fixed $n$, we have
$$
\lim_{h \rightarrow 0_+}\bbE_\theta\bigg[\|\beta^h_n(\theta)\|\bigg] = \bbE_\theta\bigg[\lim_{h \rightarrow 0_+}\|\beta^h_n(\theta)\|\bigg] = \bbE_\theta[\|c(\theta)\|].
$$
As $\|c(\theta)\|$ is bounded, we have $ \bbE_\theta[\|c(\theta)\|]$ well defined and thus $\lim_{h \rightarrow 0_+}\bbE_\theta\bigg[\|\beta^h_n(\theta)\|\bigg]$ exists for any fixed $n$. For any fixed $h$, by triangle inequality of the norm, we have almost surely
$$
\lim_{n \rightarrow +\infty}\|\beta^h_n(\theta)\|  \le \left(\lim_{n \rightarrow +\infty}\big\|\nu_n(\theta+h e_j)-\nu(\theta+h e_j)\big\|+\lim_{n \rightarrow +\infty}\big\|\nu_n(\theta)-\nu(\theta)\big\|\right)/h = 0,
$$
because $\nu_n(\theta)$ converges to $\nu(\theta)$ almost surely. As we have already showed that $\|\beta^h_n(\theta)\| \le 2L$, again by dominated convergence theorem, we have
$$
\lim_{n \rightarrow +\infty}\bbE_\theta[\|\beta^h_n(\theta)\|] = \bbE_{\theta}[\lim_{n \rightarrow +\infty}\|\beta^h_n(\theta)\|] = 0.
$$
Combining the above results and Moore-Osgood theorem, we derive
\begin{align*}
    \lim_{n \rightarrow +\infty}\bbE_\theta\bigg[\big\|\frac{\partial}{\partial\theta_j}(\nu_n(\theta)-\nu(\theta)\big\|\bigg] \le \lim_{n \rightarrow +\infty}\lim_{h \rightarrow 0_+}\bbE_\theta\bigg[\|\beta^h_n(\theta)\|\bigg] = \lim_{h \rightarrow 0_+}\lim_{n \rightarrow +\infty}\bbE_\theta\bigg[\|\beta^h_n(\theta)\|\bigg] = \lim_{h \rightarrow 0_+}0 = 0.
\end{align*}
In terms of the gradient's convergence, using the fact that for any norm $\|\cdot\|$ and vector $x \in \bbR^{d_\theta}$, $\|x\|\le \sum_{j=1}^{d_\theta}\|x_j\|$ where $x_j$ is the $j$-th element of $x$, we have
\begin{align*}
    &\lim_{n \rightarrow +\infty}\bbE_\theta\bigg[\big\|\nabla_\theta(\nu_n(\theta)-\nu(\theta))\big\|\bigg] \le    \lim_{n \rightarrow +\infty}\sum_{j=1}^{d_\theta}\bbE_\theta\bigg[\big\|\frac{\partial}{\partial\theta_j}(\nu_n(\theta)-\nu(\theta))\big\|\bigg] \\
    &\le \sum_{j=1}^{d_\theta}\lim_{n \rightarrow +\infty}\bbE_\theta\bigg[\big\|\frac{\partial}{\partial\theta_j}(\nu_n(\theta)-\nu(\theta)\big\|\bigg]=0.
\end{align*}
This concludes the proof.
\end{proof}

\textbf{Lemma~\ref{lemma:validity-of-recon-loss}}
\textit{
Consider the data generation process $Y = g(\theta,X) + \epsilon$ where $\epsilon$ is independent of $(\theta,X)$ and $\bbE[\epsilon]=0,\mathrm{Var}[\epsilon^2]=\sigma^2$. 
Then we have $\min_{\phi'}L_r(\phi')=2\sigma^2$. Moreover, if $L_r(\phi)=2\sigma^2$, then for any two random pairs of $(\theta_1,x_1), (\theta_2,x_2) \in \Theta\times\cX$ following the data distribution, 
\begin{align*}
   g(\theta_1,x_1)-g(\theta_2,x_2) = \int_{\theta_2,x_2}^{\theta_1,x_1}\nabla_{\theta,x}h_\phi(\Tilde{\theta},\Tilde x)d\Tilde{\theta} d\Tilde x
\end{align*}
almost surely.
}

\begin{proof}[Proof of Lemma~\ref{lemma:validity-of-recon-loss}]
\begin{align*}
     L_r(\phi) &= \bbE_{(\theta_1,X_1,Y_1),(\theta_2,X_2,Y_2)}\bigg[\left(Y_1-Y_2-\int^{\theta_2,X_2}_{\theta_1,X_1}h_\phi(\Tilde{\theta},\Tilde x)d\Tilde{\theta}d\Tilde x\right)^2\bigg] \\
            &= \bbE_{(\theta_1,X_1,\epsilon_1),(\theta_2,X_2,\epsilon_2)}\bigg[\left(g(\theta_1,X_1)+\epsilon_1-g(\theta_2,X_2)-\epsilon_2-\int^{\theta_2,X_2}_{\theta_1,X_1}h_\phi(\Tilde{\theta},\Tilde x)d\Tilde{\theta}d\Tilde x\right)^2\bigg]\\
            & = \bbE_{(\theta_1,X_1),(\theta_2,X_2)}\bigg[\left(g(\theta_1,X_1)-g(\theta_2,X_2)-\int^{\theta_2,X_2}_{\theta_1,X_1}h_\phi(\Tilde{\theta},\Tilde x)d\Tilde{\theta}d\Tilde x\right)^2\bigg] + \bbE_{\epsilon_1,\epsilon_2}[(\epsilon_1-\epsilon_2)^2] \\ 
            &+ \bbE_{(\theta_1,X_1,\epsilon_1),(\theta_2,X_2,\epsilon_2)}\bigg[\left(g(\theta_1,X_1)+\epsilon_1-g(\theta_2,X_2)-\epsilon_2-\int^{\theta_2,X_2}_{\theta_1,X_1}h_\phi(\Tilde{\theta},\Tilde x)d\Tilde{\theta}d\Tilde x\right)(\epsilon_1-\epsilon_2)\bigg].
\end{align*}
Note that by independence assumption of $\epsilon_1$ and $\epsilon_2$, we derive
$$
\bbE_{\epsilon_1,\epsilon_2}[(\epsilon_1-\epsilon_2)^2] = \bbE_{\epsilon_1,\epsilon_2}[\epsilon_1^2-2\epsilon_1\epsilon_2+\epsilon^2_2] = \bbE_{\epsilon_1}[\epsilon^2_1]+\bbE_{\epsilon_2}[\epsilon^2_2] -\bbE_{\epsilon_1}[\epsilon_1]\bbE_{\epsilon_2}[\epsilon_2] = 2\sigma^2.
$$
Similarly, 
\begin{align*}
    &\bbE_{(\theta_1,X_1,\epsilon_1),(\theta_2,X_2,\epsilon_2)}\bigg[\left(g(\theta_1,X_1)+\epsilon_1-g(\theta_2,X_2)-\epsilon_2-\int^{\theta_2,X_2}_{\theta_1,X_1}h_\phi(\Tilde{\theta},\Tilde x)d\Tilde{\theta}d\Tilde x\right)(\epsilon_1-\epsilon_2)\bigg] = \\
    & \bbE_{(\theta_1,X_1),(\theta_2,X_2)}\bigg[\left(g(\theta_1,X_1)+\epsilon_1-g(\theta_2,X_2)-\epsilon_2-\int^{\theta_2,X_2}_{\theta_1,X_1}h_\phi(\Tilde{\theta},\Tilde x)d\Tilde{\theta}d\Tilde x\right)\bigg] \bbE_{\epsilon_1,\epsilon_2}[\epsilon_1-\epsilon_2] = 0.
\end{align*}
Hence, we have
\begin{equation}\label{eqn:recon-loss-decom}
     L_r(\phi) = \bbE_{(\theta_1,X_1),(\theta_2,X_2)}\bigg[\left(g(\theta_1,X_1)-g(\theta_2,X_2)-\int^{\theta_2,X_2}_{\theta_1,X_1}h_\phi(\Tilde{\theta},\Tilde x)d\Tilde{\theta}d\Tilde x\right)^2\bigg] + 2\sigma^2.
\end{equation}

Because the first term is non-negative, $\inf_{\phi'}L_r(\phi') \ge 2\sigma^2$. By choosing $h_\phi = \nabla g$, $L_r(\phi) = 2\sigma^2$. Hence, $\min_{\phi'}L_r(\phi')=2\sigma^2$. The almost sure equality follows from the fact that when the expectation of a non-negative function (first term on the right hand side of~\eqref{eqn:recon-loss-decom}) is zero, there only exists a set of measure zero such that the function value is not zero. 
\end{proof}

\section{Path  Parametrization and Sampling}\label{sec:path-param}
For simplicity of presentation, we present the path parametrization for $\theta$. The case for the concatenated $(x,\theta)$ pair is identical. 

For any given pair of $\theta, \theta' \in \bbR^d$, the path from $\theta$ to $\theta'$ can be parameterized by a function $r(t) = [r_1(t), r_2(t), \dots, r_d(t)]$ where  $r_i(t): \bbR \rightarrow \bbR$ for all $i \in [d]$ with the restriction that $r(0)=\theta$ and $r(1)=\theta'$. When $r_i(t)$ is chosen to be of polynomial type with highest degree $\tau$, i.e.,
$$
r_i(t) = a_{i0}+ a_{i1}t + \cdots + a_{i\tau}t^\tau.
$$
We have the equations 
\begin{align}
    &a_{i0} = \theta_{[i]} \quad \forall i \in \{1,\dots,d\}, \\
    &a_{i0}+ a_{i1} + \cdots + a_{i\tau} = \theta'_{[i]} \quad \forall i \in \{1,\dots,d\},
\end{align}
where $\theta_{[i]}$ is the $i$-th element of $\theta$. In other words, set $a_{i0} = \theta_{[i]}$ for all $i \in \{1,\dots,d\}$ and find $a_{i1}, a_{i2}, \dots, a_{i\tau}$ such that $$a_{i1} + \cdots + a_{i\tau} = \theta'_{[i]}-\theta_{[i]}$$ for all $i \in \{1,\dots,d\}$. In this work, we use a simple approach that first randomly samples $a_{i1}, \dots, a_{i(\tau-1)}$ and subsequently sets $a_{i\tau} = \theta'_{[i]}-\theta_{[i]}-\sum_{j=1}^{\tau-1}a_{ij}$.

\section{Details on Simulation Datasets and Real-World Application Datasets}\label{sec:data-gen}
\subsection{Simulation Objective Functions and Dataset Generation}
\begin{table}[H]
\centering
\caption{Simulation Objective functions}
\label{tab:sim-funcs}
\begin{tabular}{@{}ll@{}} 
\toprule
\textbf{Name} & \textbf{Formula}  \\ \midrule
Linear        & $g(x,\theta) =x^TS\theta\;\;$ where $S$ is the function parameter                     \\
Quadratic*        & $g(x,\theta) =(x^TS\theta)^2\;\;$ where $S$ is the function parameter                       \\
NN small        & Multi-layer perceptrons with input $\mathbf{z}$ and two  hidden layers of 100 neurons 
\\&and output dimension 1                 \\
NN large        & Multi-layer perceptrons with input $\mathbf{z}$  and two hidden layers of 1000 neurons
\\&and output dimension 1                   \\ 
Perm        & $g(\mathbf{z} ) = \sum_{i=1}^d \left( \sum_{j=1}^d (j + \beta) \left( z_{j}^i - \frac{1}{j^i} \right) \right)^2$ \\
Rosenbrock        & $g(\mathbf{z} ) = \sum_{i=1}^{d-1} \left( 100(z_{i+1} - z_i^2)^2 + (z_i - 1)^2 \right)
$                       \\
Zakharov        & $g(\mathbf{z} ) = \sum_{i=1}^d z_i^2 + \left( \sum_{i=1}^d 0.5iz_i \right)^2 + \left( \sum_{i=1}^d 0.5iz_i \right)^4$                        \\
Trid        & $g(\mathbf{z} ) = \sum_{i=1}^d (z_i - 1)^2 - \sum_{i=2}^d z_i z_{i-1}
$                    \\
Dixon-Price        & $g(\mathbf{z}) = (z_1 - 1)^2 + \sum_{i=2}^d i (2z_i^2 - z_{i-1})^2
$                     \\
Griewank*        & $g(\mathbf{z}) = \sum_{i=1}^d \frac{z_i^2}{4000} - \prod_{i=1}^d \cos\left(\frac{z_i}{\sqrt{i}}\right) + 1
$                        \\
Ackley*        & $g(\mathbf{z}) = -a \exp \left( -b\sqrt{  \frac{1}{d} \sum_{i=1}^d z_i^2 } \right) - \exp \left( \frac{1}{d} \sum_{i=1}^d \cos(c z_i) \right) + a + \exp(1)
$                     \\\bottomrule

\end{tabular}
\end{table}
For the Linear and Quadratic functions in Table~\ref{tab:sim-funcs}, \(x\) represents data and \(\theta\) is the control parameter we aim to optimize. Both \(x\) and \(\theta\) are 3-dimensional vectors. For the other functions in Table~\ref{tab:sim-funcs}, \(z\) is a 6-dimensional vector, with the first three components representing \(x\) and the last three components representing \(\theta\). To unify all functions on the same scale, we shift and scale each function such that the most significant features are preserved within the domain of the unit hypercube. This is achieved by applying constant multiplication and addition operations to $x$. Subsequently, a constant scaling factor is applied to the shifted function to ensure that the resulting objective values remain within a suitable range, thereby maintaining numerical stability. Without loss of generality, \(x\) and \(\theta\) are sampled uniformly from the unit hypercube.

When generating data, we add zero-mean Gaussian noise $\epsilon$ to the outputs of each function. To ensure that the added noises are proportional to the scales of the objective functions, we choose the variance of $\epsilon$ so that the \emph{single-sample Signal-to-Noise Ratio} (S3NR) is 0.5 for the scarce-data regime and 0.2 for large regime, a very challenging setting that better reflects real world scenarios. Specifically, S3NR for a given objective function $g$ is defined as 
\[\text{S3NR}(g, \epsilon) = \frac{\bbE[g^2]}{\bbE[\epsilon^2]}.\]
Note that the total \emph{effective} SNR increases with larger number of samples. While the minimum required total SNR for accurate gradient estimation is a valuable research question, we will investigate it in the work that follows. Functions marked with an asterisk $(*)$ in Table~\ref{tab:sim-funcs} are considered in the large-data regime, while the others are in the scarce-data regime.

\subsection{Settings of Real-World Applications and Dataset Generation}\label{sec:real-setting}
The datasets for the welded beam design problem and the pressure vessel design problem are generated using objective functions in a manner similar to those in simulation problems. The datasets for the other three problems are generated using the simulator provided by the SimOpt testbed~\citep{simoptgithub}.

\textbf{Welded Beam Design Problem}

The welded beam problem is a common engineering optimization challenge aimed at finding the optimal dimensions of a beam, such that the fabrication cost is minimized. In our setup, we consider the underlying objective function, where $z_1$, $z_2$, and $z_3$ are components of $\theta$, and $z_4$ represents $x$:
\[
g(\mathbf{z}) = 1.10471z_1^2z_2 + 0.04811z_3z_4(14 + z_2)
\]
Without loss of generality, we estimate the gradient of the cost function with respect to $\theta$ over the domain of the unit hypercube. The datasets are generated with $\theta$ and $x$ both uniformly sampled from the domain.

\textbf{Pressure Vessel Design Problem}

The pressure vessel design problem aims to find the optimal dimensions of a shell to minimize the total cost of material, forming, and welding. Similar to welded beam design problem, we consider the underlying objective function where $z_1$, $z_2$, and $z_3$ are components of $\theta$, and $z_4$ represents $x$:

\[
g(\mathbf{z}) = 0.6224 z_1 z_3 z_4 + 1.7781 z_2 z_3^2 + 3.1661 z_1^2 z_4 + 19.84 z_1^2 z_3
\]

Without loss of generality, we estimate the gradient of the cost function with respect to $\theta$ over the domain of the unit hypercube. The datasets are generated with $\theta$ and $x$ both uniformly sampled from the domain.

\textbf{Continuous Newsvendor Problem (CntNv)}

Continuous Newsvendor Problem is also known as the Single-Period Problem. In the problem, a vendor orders a fixed quantity of liquid ($\theta$) at the start of the day, incurring a per-unit cost ($x$). The product is sold at a constant price ($s$), and unsold liquid can be salvaged at a constant price ($w$) by day's end.

The daily random demand for the liquid follows a Burr Type XII distribution $F(z) = 1 - (1 + z^\alpha)^{-\beta}$ where $z$, $\alpha$, and $\beta$ are positive. The objective is to determine the optimal order quantity to maximize the expected profit. Following the recommended search scope, we generate datasets by uniformly sampling $x$ within the range of 2.5 to 7.5 and $\theta$ within the range of 0 to 1. Subsequently, we obtain the objective values using the simulator provided by the SimOpt testbed~\citep{simoptgithub}. In other words, this is a true black-box optimization problem without explicit formula for the objective function. The true gradient of $\theta$ is estimated by the SimOpt testbed. Since $\theta$ is one-dimensional in this scenario, we only present the results on the estimated gradient's norm distance in Figure~\ref{fig:real-sim}.

\textbf{M/M/1 Queue}

An M/M/1 queue is a stochastic process with a state space representing the number of customers in the system. Arrival times and service times both follow exponential distributions with different distribution parameters ($x$ and $\theta$, respectively). By adjusting $x$ and $\theta$, the distributions of arrival times and service times can be altered accordingly. The objective is to find the optimal service time distribution parameter $\theta$ (service rate) that minimizes the expected average sojourn time plus a penalty for increasing the service rate $c\theta ^2$ where c is a positive constant. Following the recommended search scope, we generate datasets by uniformly sampling $x$ and $\theta$ within the range of 1 to 6. Subsequently, we obtain the objective values using the simulator provided by the SimOpt testbed~\citep{simoptgithub}. The true gradient of $\theta$ is estimated by the SimOpt testbed. 
Since $\theta$ is one-dimensional in this scenario, we only present the results on the estimated gradient's norm distance in Figure~\ref{fig:real-sim}.

\textbf{Stochastic Activity Network (SAN)}

Stochastic Activity Networks (SANs), also known as PERT networks, are used in planning large-scale projects. The SAN graph used in our study follows a specific structure shown in Figure~\ref{fig:san-graph} where each arc $i$ is associated with a task of random duration. Task durations are independent and exponentially distributed with mean $\mu_i$. The objective is to minimize the total time spent on the longest path from node $a$ to node $i$, plus the sum of the costs of each arc, $1/\mu_i$. We consider the first eight parameters, $\mu_1$ to $\mu_8$, as $\theta$ and the last five parameters, $\mu_9$ to $\mu_{13}$, as $x$. Following the recommended search scope, we generate datasets by uniformly sampling $x$ and $\theta$ within a hypercube of range from 0.1 to 5. Subsequently, we obtain the objective values using the simulator provided by the SimOpt testbed~\citep{simoptgithub}. The gradient of $\theta$ is then estimated.

\begin{figure}[H]
\centering
    \includegraphics[width=0.45\textwidth]
    {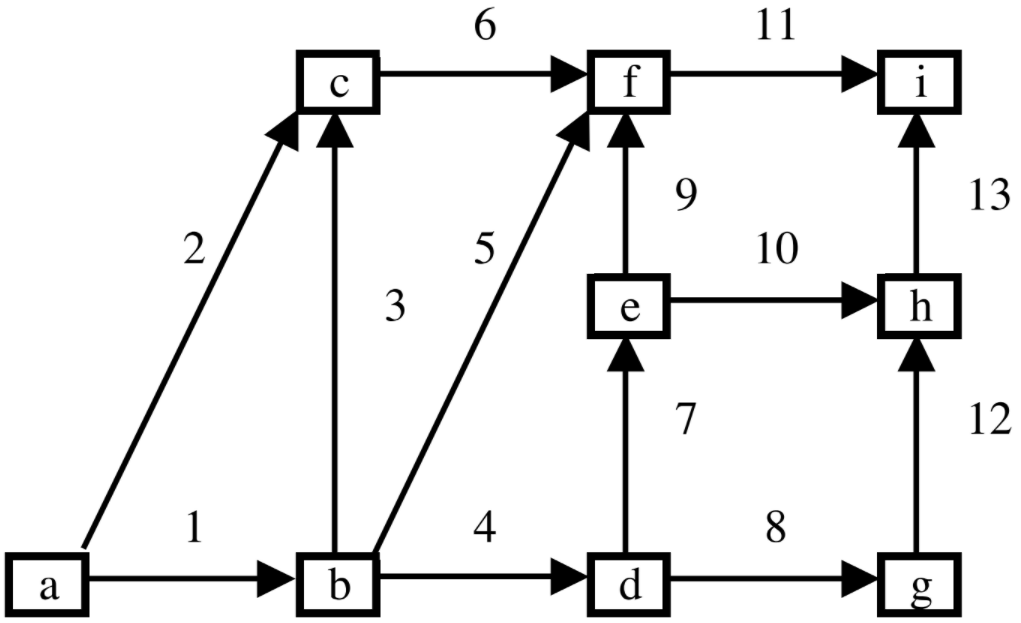}
    \caption{SAN project graph}
    \label{fig:san-graph}
\end{figure}

\subsection{Number of Samples}
\label{sec:num-samples}
\paragraph{Simulation Problems.} Datasets in the scarce-data regime consist of 128 samples. In the large-data regime, the Quadratic and Ackley functions consist of 20,000 samples each, and the Griewank function consists of 50,000 samples. 

\paragraph{Real-world Applications.} The datasets for the welded beam design problem, pressure vessel design problem, and continuous newsvendor problem are generated in the scarce-data regime, consisting of 128 samples each. The datasets for the M/M/1 queue problem and the stochastic activity network problem are generated in the large-data regime, consisting of 10,000 samples each. 

\section{Experimental Detail}\label{sec:exp-detail}

To ensure a fair comparison between proposed methods, we use the same structure as backbone for all experiments. Each method utilizes a multi-layer perceptron (MLP) with three hidden layers, each comprising $500$ neurons. For ETD, the input dimension of the backbone is the sum of the dimensions of \(x\) and \(\theta\), and the output dimension is one, corresponding to the estimated objective value. For all variations of DGI, the backbone's input and output dimensions are both the sum of the dimensions of \(x\) and \(\theta\), as these methods estimate the partial derivatives of the objective value with respect to each component of input. 

For all methods, the learning rate is fixed at $5e^{-4}$, and the batch size is set to $32$. For hyper-parameters specific to DGI methods, the integration step is set to 512 and the highest degree of sampled polynomial paths, $\tau$, is set to 10. Additionally, the balance weight and the number of sampled paths are configured according to each variation of DGI methods. We employ the Adam optimizer for training. The total number of training steps is determined to ensure convergence for each dataset, varying accordingly. The exact number of training steps is indicated in each experimental graph in Appendix~\ref{sec:app-full-results}.

During evaluation, we first compute the estimated true gradient and subsequently compare it with the gradients estimated by our methods. To obtain the estimated true gradient, we uniformly sample a set $X$, from the given dataset and uniformly sample a point $\theta$ from the predefined domain of $\theta$, e.g., the unit hypercube for all simulation problems. We then calculate the objective values for each pair of $x \in X$ and $\theta$, and perform back-propagation to derive the estimated true gradient for $\theta$, averaged over $X$. Subsequently, we use our methods to estimate the gradient of $\theta$ averaged over $X$. The comparison between our estimated gradients and the true gradients is then performed using cosine similarity and norm distance metrics, as detailed in Section~\ref{sec:exp}. We then average the results over 1,000 uniformly sampled values of $\theta$. This evaluation is repeated 100 times throughout the entire training process to monitor the progression and stability of the training.

All experiments were conducted on a single NVIDIA RTX 6000 Ada GPU. Due to variations in the total number of training steps across different datasets, the running time also varies. For illustration, we consider a typical NN Small dataset with 1,000 training steps. The running time for ETD is approximately 1 minute while DGI-full requires around 3 minutes. Since the result is averaged over 50 datasets, ETD takes a total of 50 minutes while DGI-full requires 2.5 hours. The evaluation phase for each trained model takes approximately 3 minutes for both methods.

\section{Complete and Enlarged Experimental Results.}
\label{sec:app-full-results}
\subsection{Performance on Optimization Tasks}
\begin{table}[h]
\caption{Performance for Scarce-Data Regime. \textbf{\emph{Lower}} value indicates \textbf{\emph{better}} performance.} \label{tab:scarce}
\begin{center}
\begin{tabular}{lllll}
\textbf{}  &\textbf{Linear} &\textbf{NN-s} &\textbf{Zak} &\textbf{Perm} \\
\hline 
RS  &0.7559&1.0850&0.4161&2.6436 \\
OC  &0.3932&0.5587&0.0248&0.5083 \\
\hdashline
ETD(R) &0.7085&1.0081&0.2174&0.6337 \\
ETD(G)  &0.6949&0.9393&0.2174&0.6238 \\
DGI(R) &0.5661&0.8016&0.0745&0.3267 \\
DGI(G) &0.5593&0.7854&0.0745&0.3267 \\

\end{tabular}
\end{center}
\end{table}

\begin{table}[h]
\caption{Performance for Scarce-Data Regime. \textbf{\emph{Lower}} value indicates \textbf{\emph{better}} performance.} \label{tab:scarce}
\begin{center}
\begin{tabular}{lllll}
\textbf{}  &\textbf{Rose} &\textbf{NN-l} &\textbf{Trid} &\textbf{Dix} \\
\hline 
RS  &1.2697&0.8808&0.8621&0.4379 \\
OC  &0.1798&0.4885&0.2184&0.0784 \\
\hdashline
ETD(R) &0.7978&0.8154&0.1034&0.0654 \\
ETD(G)  &0.8539&0.8846&0.1092&0.0719 \\
DGI(R) &0.5506&0.8885&0.1034&0.0654 \\
DGI(G) &0.5506&0.8885&0.1034&0.0654 \\

\end{tabular}
\end{center}
\end{table}

\begin{table}[h]
\caption{Performance for Large-Data Regime. Tasks include Quadratic (\textbf{\emph{QR}}), Ackley (\textbf{\emph{AK}}), Beam (\textbf{\emph{BM}}), and Vessel (\textbf{\emph{VL}}). \textbf{\emph{Lower}} value indicates \textbf{\emph{better}} performance.} 
\label{tab:large}
\begin{center}
\begin{tabular}{rrrrrr}
\textbf{}  &\textbf{QR}  &\textbf{AK}&\textbf{BM} &\textbf{VL} \\
\hline 

RS  &814.0&0.4067&431.0&1849 \\
OC  &6.1&-0.2976&7.0&0.0 \\
\hdashline
ETD(R) &207.0&0.0314&77.0&274.0 \\
ETD(G)  &190.0&-0.2939&85.0&263.0 \\
DGI(R) &31.0&-0.1756&0.0&218.0 \\
DGI(G) &26.0&-0.3068&0.0&221.0 \\

\end{tabular}
\end{center}
\end{table}



\subsection{Gradient Estimation}
\subsubsection{Simulation Problem Results in the Scarce-Data Regime}
\begin{figure}[!htbp]
\begin{subfigure}[b]{\textwidth}
       \centering
    \includegraphics[scale=0.6]
    {AISTATS2025/images/legend_row.png}
\end{subfigure}
\begin{subfigure}[b]{\textwidth}
\centering
\includegraphics[width=\textwidth]
{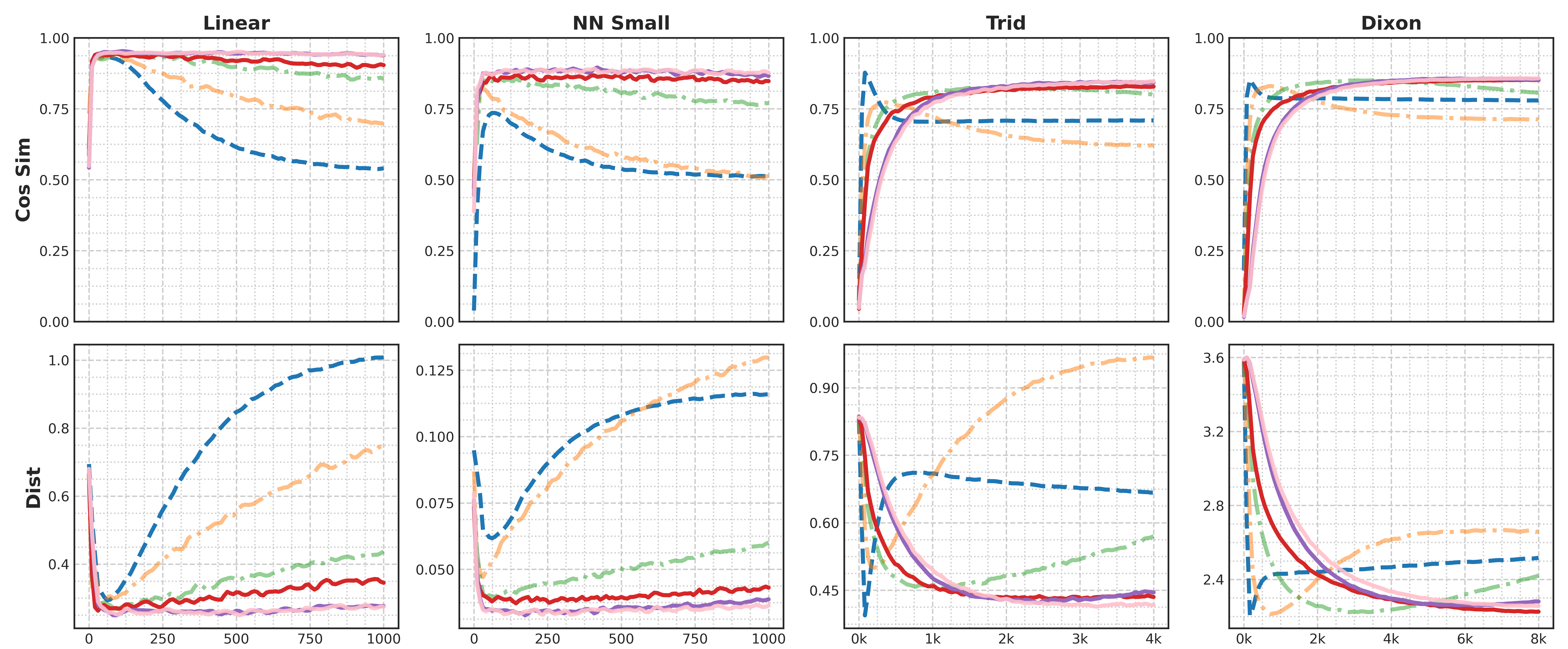}
\vspace{-0.5cm}
\end{subfigure}
\begin{subfigure}[b]{\textwidth}
    \centering
    \includegraphics[width=\textwidth]
    {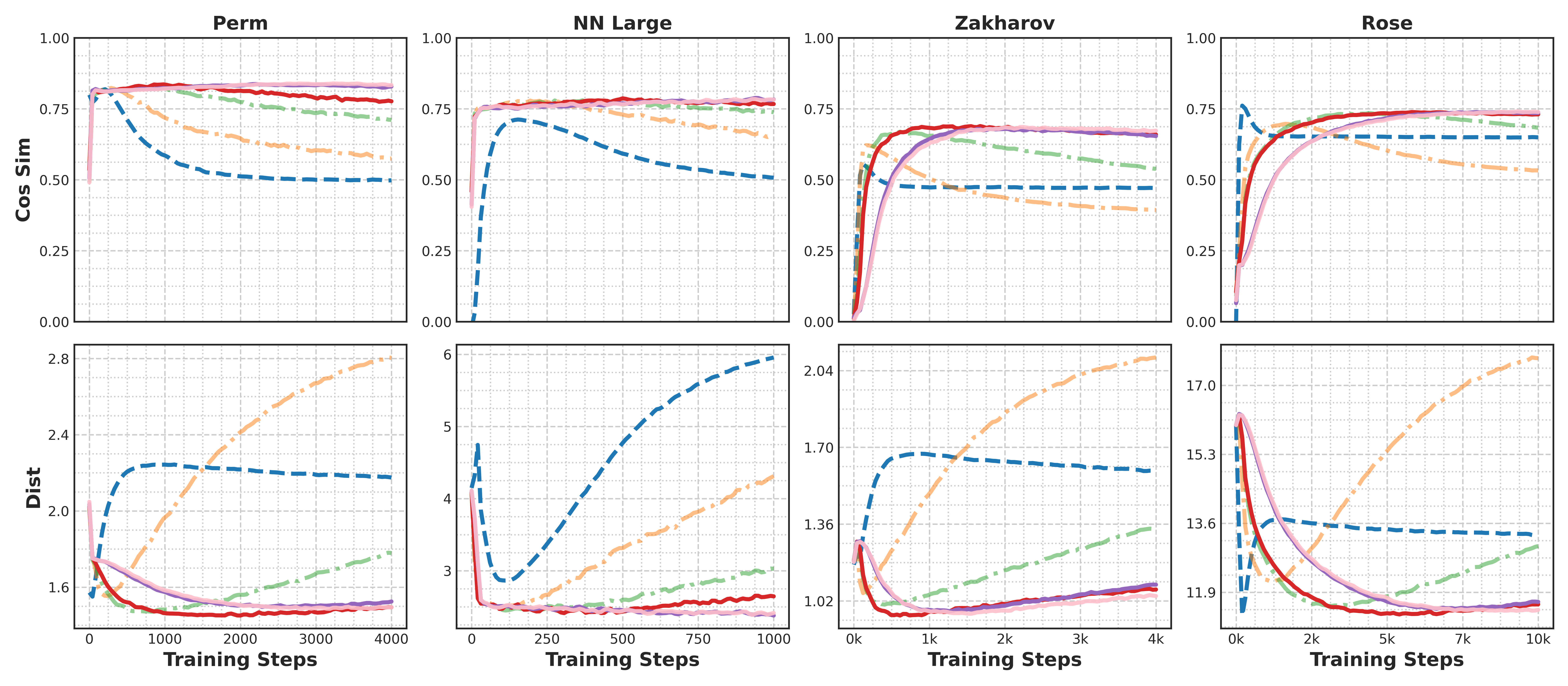}
\end{subfigure}
\caption{Results of eight simulation problems in the scarce-data regime. Top figures in each row show cosine similarity (higher is better); bottom figures show gradient norm distance (lower is better).}
\end{figure}




\subsubsection{Simulation Problem Results in the Large-Data Regime}
\begin{figure}[h]
\label{fig:sim-large}
\centering
\begin{subfigure}[b]{0.8\textwidth}
    \includegraphics[width=\textwidth]{AISTATS2025/images/legend_row.png}
    \label{fig:sub1}
    \vspace{-1em}
\end{subfigure}
\hfill
\\
\begin{subfigure}[b]{0.32\textwidth}
    \includegraphics[width=\textwidth]
    {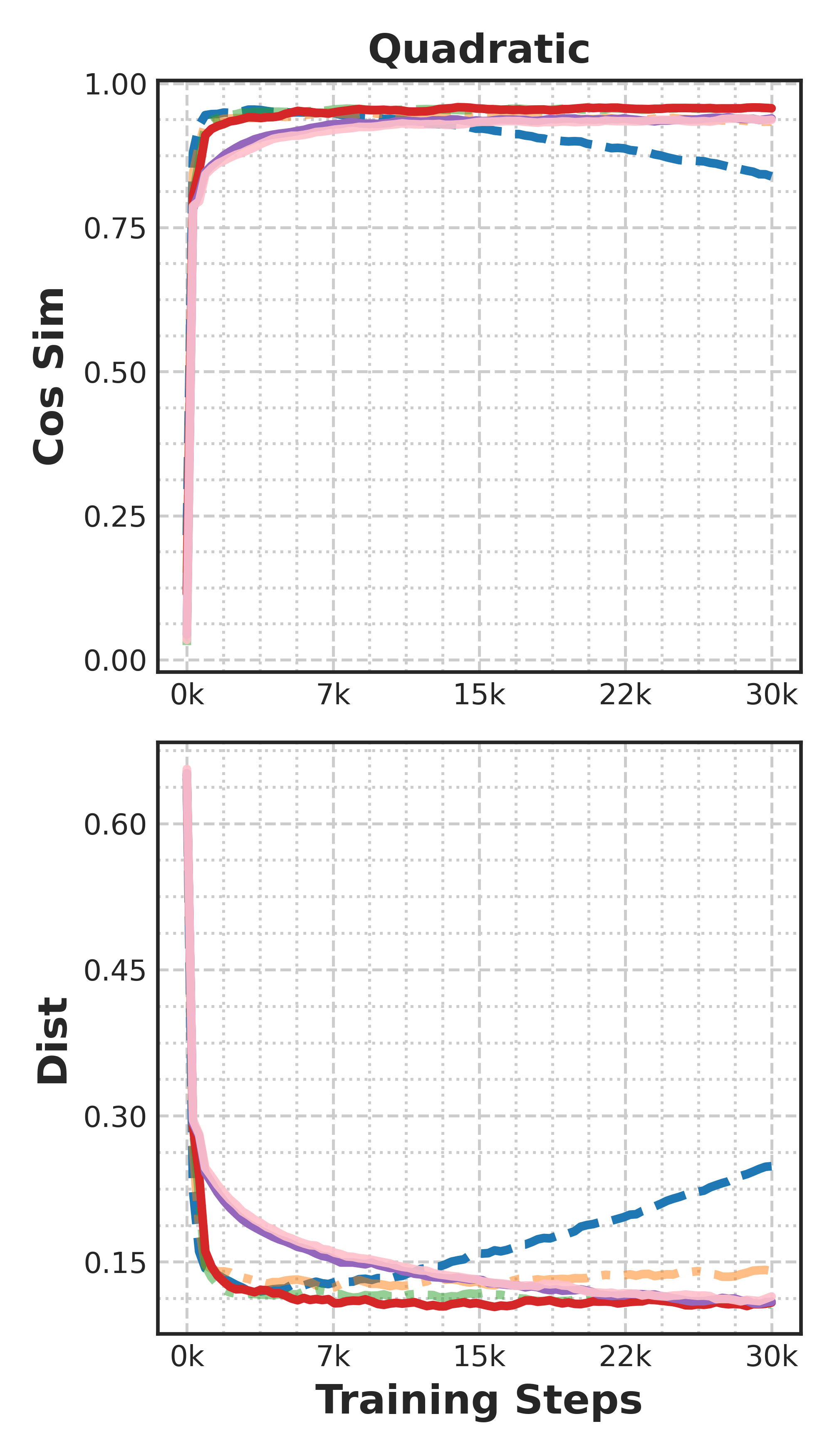}
    \label{fig:sub1}
\end{subfigure}
\hfill
\begin{subfigure}[b]{0.32\textwidth}
    \includegraphics[width=\textwidth]
    {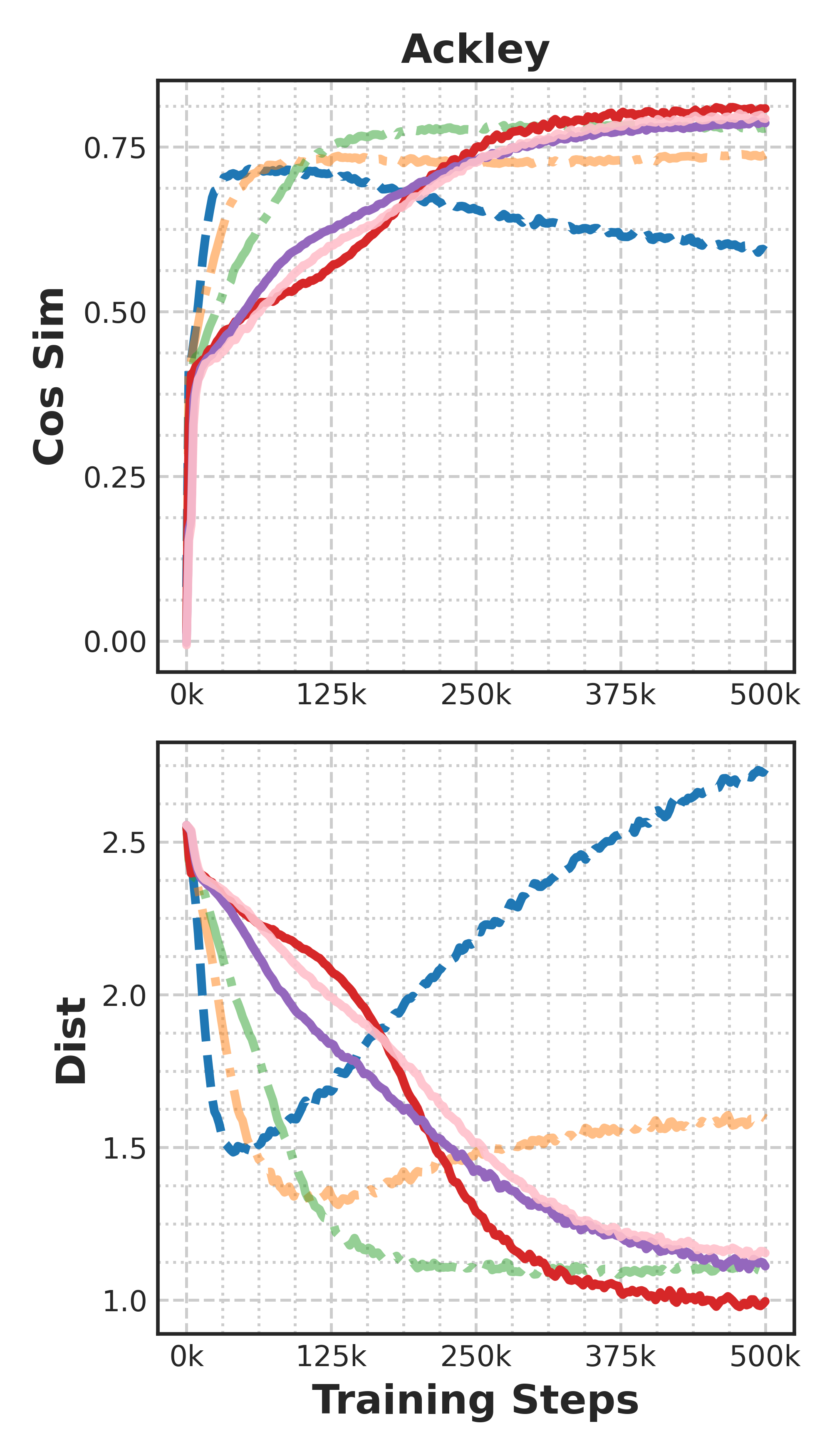}
    \label{fig:sub1}
\end{subfigure}
\hfill
\begin{subfigure}[b]{0.32\textwidth}
    \includegraphics[width=\textwidth]
    {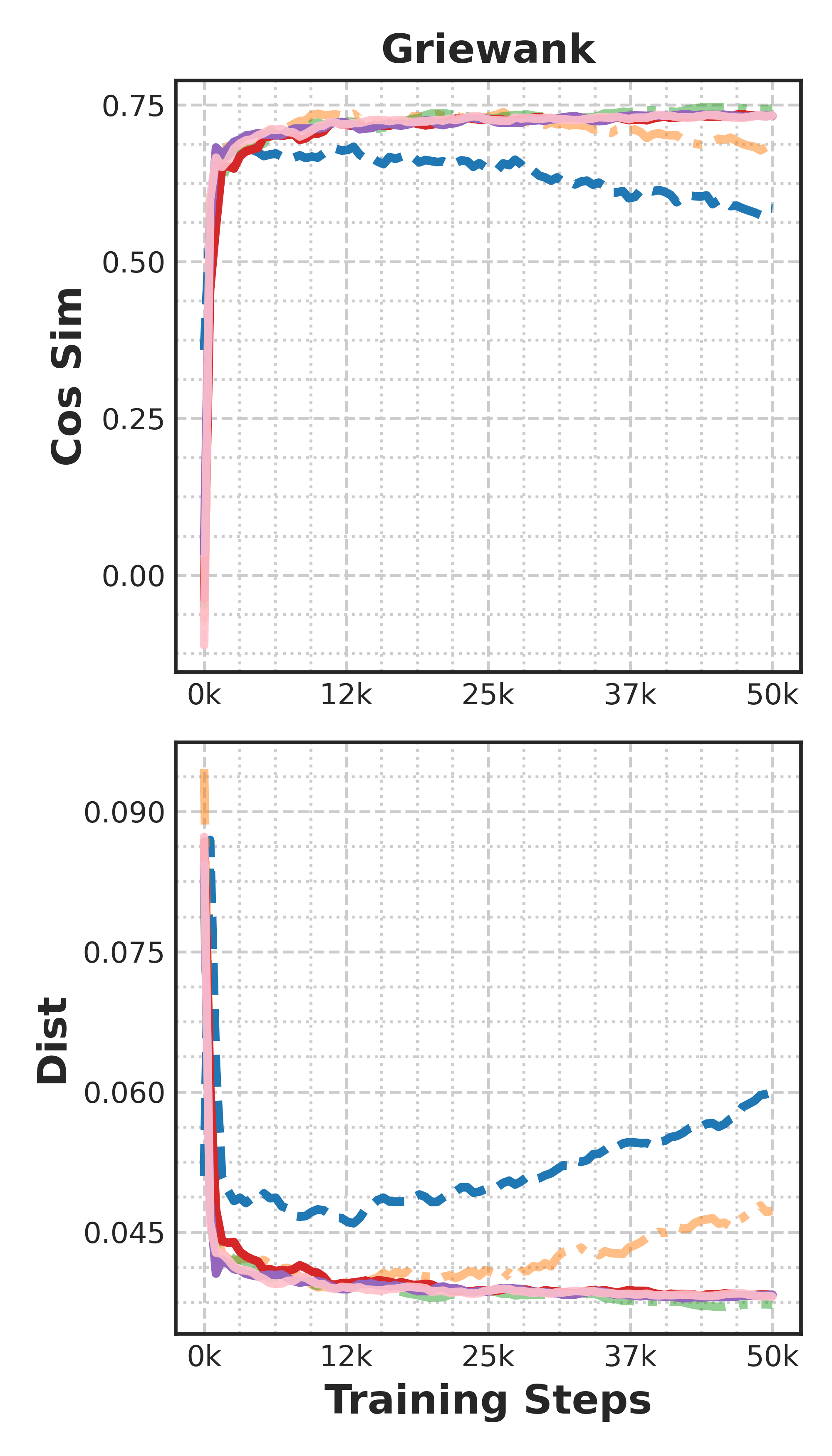}
    \label{fig:sub1}
\end{subfigure}
\vspace{-1em}
\caption{Results for simulation problems in the large-data regime. (Left two) Cosine similarity and norm distance for \textbf{Quadratic} objective functions. (Middle two) \textbf{Ackley}. (Right two) \textbf{Griewank}.}
\label{fig:sim-large}
\end{figure}

\subsubsection{Real World Application Results}
\label{sec:real-sim}
See Figure~\ref{fig:real-sim}.
\begin{figure}[h]
\centering
\begin{subfigure}[b]{0.8\textwidth}
    \includegraphics[width=\textwidth]{AISTATS2025/images/legend_row.png}
    \label{fig:sub5}
\end{subfigure}
\hfill
\\
\begin{subfigure}[b]{\textwidth}
    \includegraphics[width=\textwidth]
    {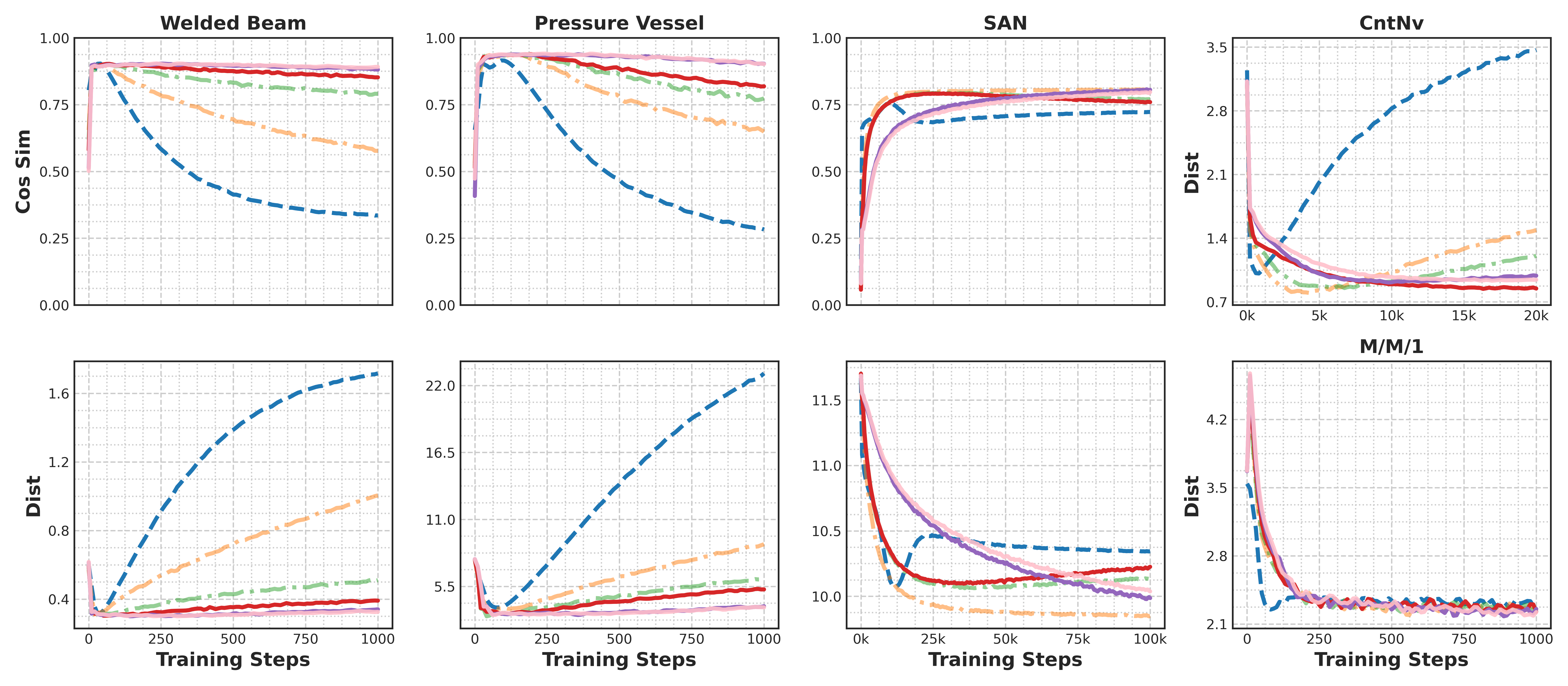}
    \label{fig:sub1}
\end{subfigure}
\caption{(From left to right) Cosine similarity and norm distance for \textbf{Welded Beam}, \textbf{Pressure Vessel}, \textbf{SAN}, and norm distance for \textbf{CntNv}, \textbf{M/M/1}.}
\label{fig:real-sim}
\end{figure}

\section{Extra Experimental Results}\label{sec:exp-more}
\subsection{Datasets Used in Ablation Studies}
We conduct all ablation studies using the NN small simulation problem in the scarce-data regime, except for the ablation on integration accuracy, which is performed on the Trid simulation problem to provide a clearer demonstration.
\subsection{Ablation on Noise}
In real-world scenarios, noise is ubiquitous and often substantial. DGI is particularly well-suited for noisy environments, as our experiments in Figure~\ref{fig:main-figure} demonstrate its robustness across various noise levels. We consider problems of varying single-sample SNR values in  $\{\infty\; \text{(no noise)}, 2, 1, 0.2, 0.1, 0.05\}$ and apply both ETD and DGI to these problems. Without noise, both methods perform well. However, with minimal noise (SNR = $2$ or $1$), the cosine similarity of ETD quickly drops to $0.7$, whereas DGI remains above $0.9$. Even under conditions with extreme noise (SNR = $0.05$), where the cosine similarity of ETD falls to $0.2$, DGI still maintains a cosine similarity of approximately $0.6$. 

\begin{figure}[H]
    \centering
    
    \begin{subfigure}[b]{0.45\textwidth}
        \centering
        \includegraphics[width=\textwidth]{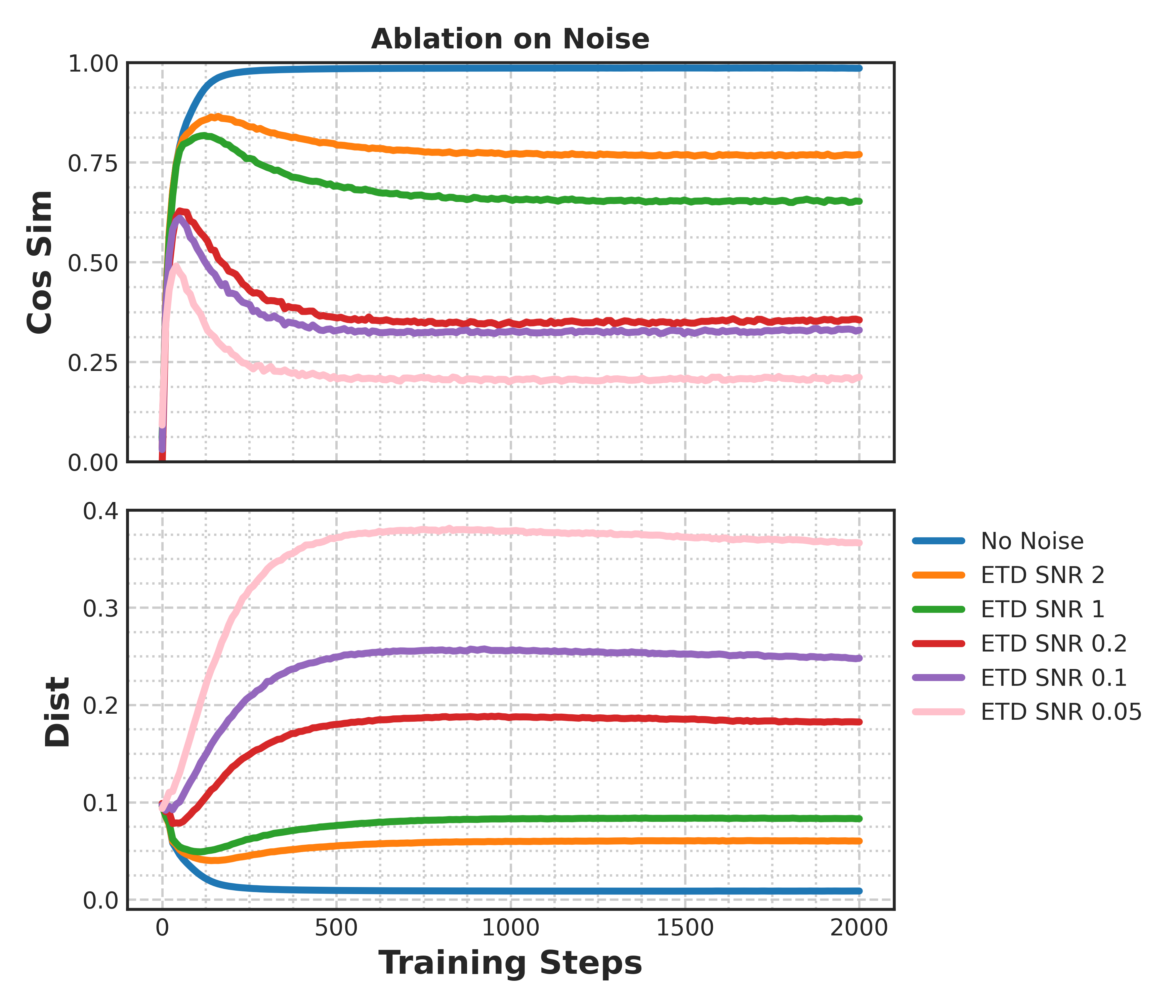}
        \caption{Ablation on Noise (ETD).}
        \label{fig:subfigure1}
    \end{subfigure}
    \hfill
    \begin{subfigure}[b]{0.45\textwidth}
        \centering
        \includegraphics[width=\textwidth]{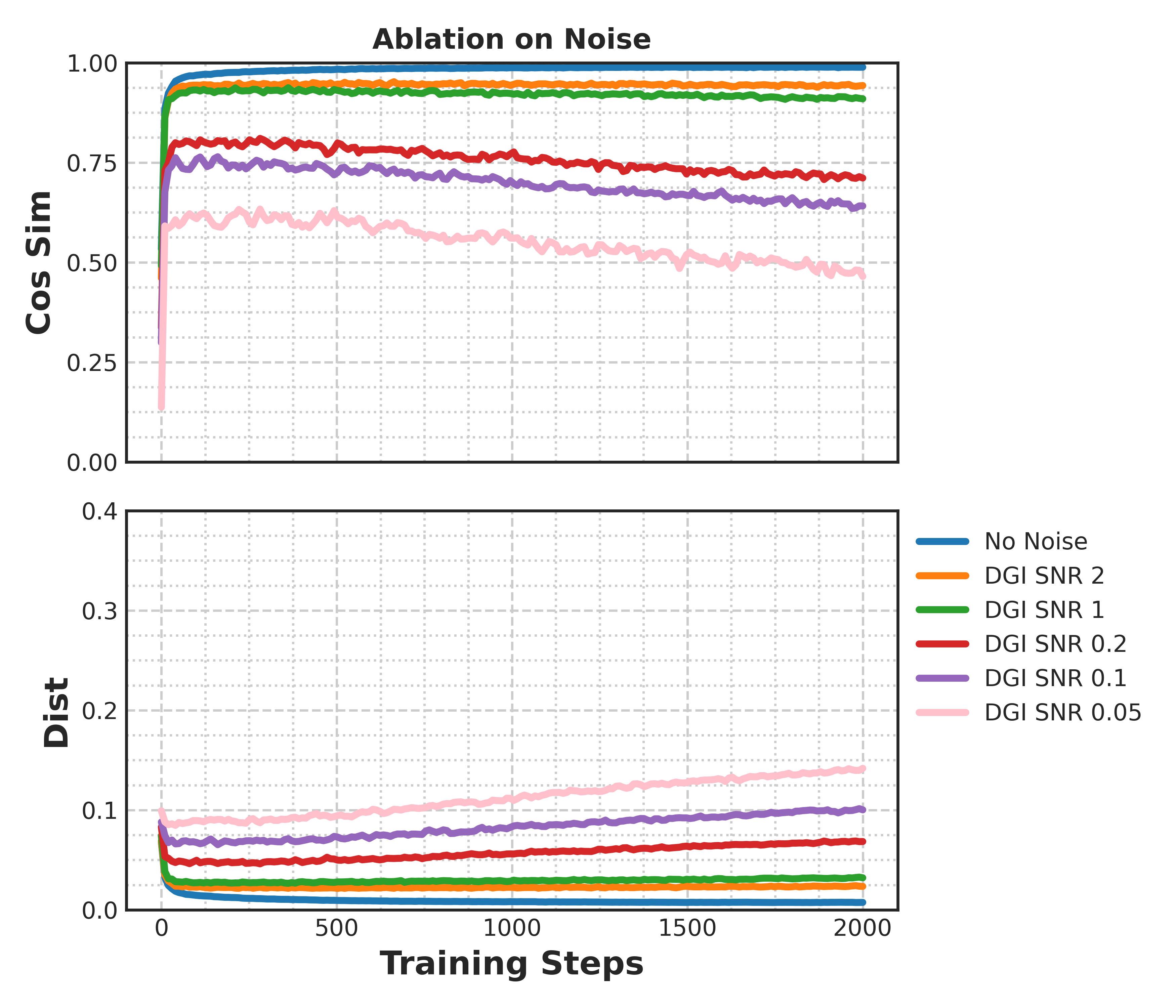}
        \caption{Ablation on Noise (DGI).}
        \label{fig:subfigure2}
    \end{subfigure}
    
    \caption{Ablation on Noise}
    \label{fig:main-figure}
\end{figure}

\section{Gradient Estimation and Train/Val Loss}
\label{sec:exp-overfit}
As shown in Figure~\ref{fig:loss-graph}, the convergence of the training loss for ETD does not necessarily correspond to optimal gradient estimation. Instead, it may indicate overfitting to noise, leading to degraded gradient estimation. In contrast, the convergence of the training loss for DGI-full corresponds to a much more stable convergence to optimal gradient estimation, highlighting the superiority of DGI methods over ETD.

\begin{figure}[H]
\begin{subfigure}[b]{0.49\textwidth}
    \includegraphics[width=\textwidth]{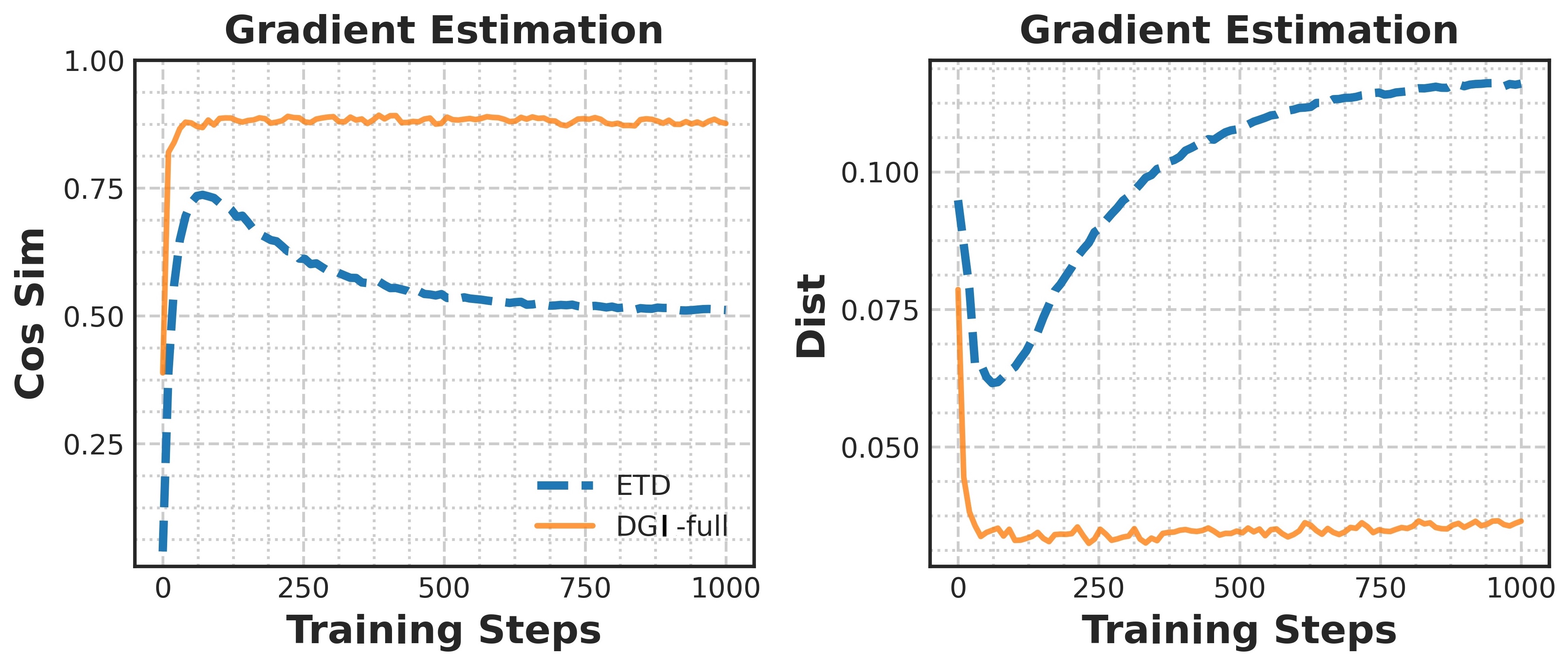}
    \label{fig:sub5}
\end{subfigure}
\hfill
\begin{subfigure}[b]{0.24\textwidth}
    \includegraphics[width=\textwidth]{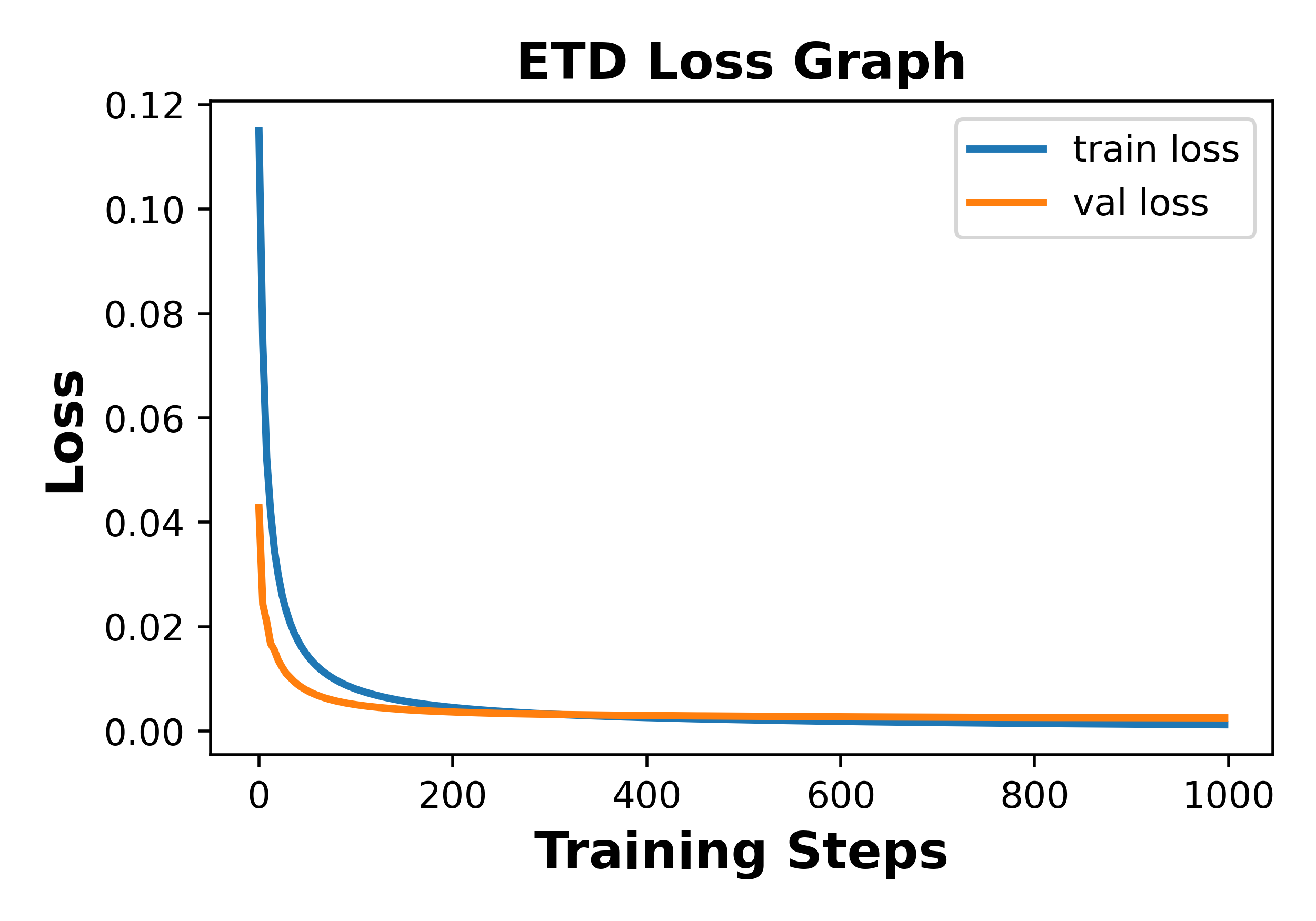}
    \label{fig:sub1}
\end{subfigure}
\hfill
\begin{subfigure}[b]{0.24\textwidth}
    \includegraphics[width=\textwidth]{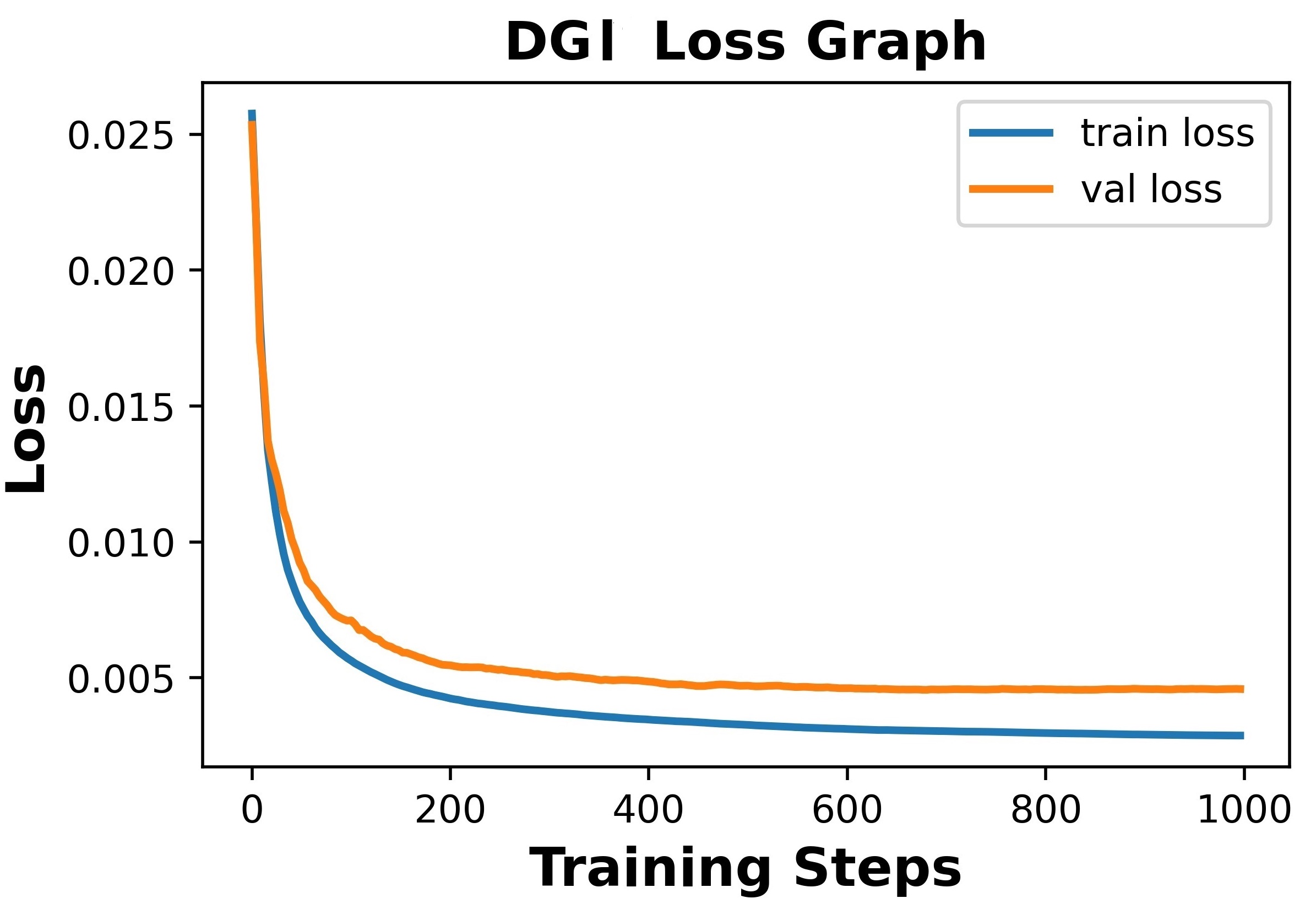}
    \label{fig:sub1}
\end{subfigure}
\caption{(From left to right) Cosine similarity and norm distance for NN small dataset in the scarce-data regime, train/val loss graph for ETD, and train/val loss graph for DGI-full.}
\label{fig:loss-graph}
\end{figure}

\section{Pseudocode}
\begin{algorithm}[H]
\caption{Deep Gradient Interpolation (DGI)}
\label{algo:dgx}
\begin{algorithmic}[1] 
\State \textbf{Input:} A neural network $h$ with parameters $\phi$, training dataset $D$, number of paths sampled $\Lambda$, batch size $B$  for estimation/path independence loss, $M$ for balance losses, $\alpha$ for balance weight, $\eta$ for learning rate, and $T$ for the number of iterations.
\State Randomly initialize $\phi$.
\For{$T$ iterations}
    \State Sample batch  $\{(\theta_i, x_i, y_i)\}_{i = 1,...,B} \sim D$
    \State $\zeta_i\gets \text{concat}(\theta_i, x_i)$ for all $i \in  \{1,\dots,B\}$
    \For{$i,j \in \{1,\dots,B\}$}
        \State $\{r^{i,j}_\lambda\}^\Lambda_{\lambda=1}\leftarrow\;\;\text{sample $\Lambda$ paths between $\zeta_i$ and $\zeta_j$}$
    \EndFor
    \State Estimation Loss and path independence loss:
    $$
     \widehat{L_e}(\phi) = \frac{1}{B^2}\sum_{k,k' \in [B]} \max_{r \in \{r^{k,k'}_\lambda\}^\Lambda_{\lambda=1}}\left( (y_k-y_{k'})-\int_{r }h_\phi(\zeta)d\zeta \right)^2.
    $$
    \State Uniformly sample pairs of indices $\{(k_m,l_m)\}_{m = 1,...,M} \sim [d]\times[d], k\neq l$  
    \State Balance Loss: 
    $$
     \widehat{L_b}(\phi) = \frac{1}{BM}\sum_{k=1}^B\sum_{m=1}^M\left(\frac{\partial h_\phi^{k_m}(\zeta_k)}{\partial \zeta^{l_m}} -\frac{\partial h_\phi^{l_m}(\zeta_k)}{\partial \zeta^{k_m}}\right)^2.
    $$
    \State $L(\phi) \gets  \widehat{L_e}(\phi) + \alpha\cdot \widehat{L_b}(\phi)$
    \State $\phi \gets \phi - \eta \cdot \nabla_\phi L(\phi)$
\EndFor
\State \textbf{return} $\widehat{\phi}=\phi$
\end{algorithmic}
\end{algorithm}


\begin{thebibliography}{}

\bibitem[Arora et~al., 2018]{arora2018convergence}
Arora, S., Cohen, N., Golowich, N., and Hu, W. (2018).
\newblock A convergence analysis of gradient descent for deep linear neural networks.
\newblock {\em arXiv preprint arXiv:1810.02281}.

\bibitem[ASO and Yorozu, 1991]{aso1991generalization}
ASO, K. and Yorozu, S. (1991).
\newblock A generalization of clairaut's theorem and umbilical foliations.

\bibitem[Avramidis and Wilson, 1996]{avramidis1996integrated}
Avramidis, A.~N. and Wilson, J.~R. (1996).
\newblock Integrated variance reduction strategies for simulation.
\newblock {\em Operations Research}, 44(2):327--346.

\bibitem[Bertsimas and Tsitsiklis, 1993]{bertsimas1993simulated}
Bertsimas, D. and Tsitsiklis, J. (1993).
\newblock Simulated annealing.
\newblock {\em Statistical science}, 8(1):10--15.

\bibitem[Bottou, 2010]{bottou2010large}
Bottou, L. (2010).
\newblock Large-scale machine learning with stochastic gradient descent.
\newblock In {\em Proceedings of COMPSTAT'2010: 19th International Conference on Computational StatisticsParis France, August 22-27, 2010 Keynote, Invited and Contributed Papers}, pages 177--186. Springer.

\bibitem[Brooks, 1958]{brooks1958discussion}
Brooks, S.~H. (1958).
\newblock A discussion of random methods for seeking maxima.
\newblock {\em Operations research}, 6(2):244--251.

\bibitem[Chemingui et~al., 2024]{chemingui2024offline}
Chemingui, Y., Deshwal, A., Hoang, T.~N., and Doppa, J.~R. (2024).
\newblock Offline model-based optimization via policy-guided gradient search.
\newblock In {\em Proceedings of the AAAI Conference on Artificial Intelligence}, volume~38, pages 11230--11239.

\bibitem[Cheng and Kleijnen, 1999]{cheng1999improved}
Cheng, R.~C. and Kleijnen, J.~P. (1999).
\newblock Improved design of queueing simulation experiments with highly heteroscedastic responses.
\newblock {\em Operations research}, 47(5):762--777.

\bibitem[Daoud et~al., 2023]{daoud2023gradient}
Daoud, M.~S., Shehab, M., Al-Mimi, H.~M., Abualigah, L., Zitar, R.~A., and Shambour, M. K.~Y. (2023).
\newblock Gradient-based optimizer (gbo): a review, theory, variants, and applications.
\newblock {\em Archives of Computational Methods in Engineering}, 30(4):2431--2449.

\bibitem[Deb, 1991]{deb1991optimal}
Deb, K. (1991).
\newblock Optimal design of a welded beam via genetic algorithms.
\newblock {\em AIAA journal}, 29(11):2013--2015.

\bibitem[Duchi et~al., 2011]{duchi2011adaptive}
Duchi, J., Hazan, E., and Singer, Y. (2011).
\newblock Adaptive subgradient methods for online learning and stochastic optimization.
\newblock {\em Journal of machine learning research}, 12(7).

\bibitem[Eckman et~al., 2023]{simoptgithub}
Eckman, D.~J., Henderson, S.~G., Shashaani, S., and Pasupathy, R. (2023).
\newblock {SimOpt}.
\newblock \url{https://github.com/simopt-admin/simopt}.

\bibitem[Fouskakis and Draper, 2002]{stochastic_opt_review}
Fouskakis, D. and Draper, D. (2002).
\newblock Stochastic optimization: a review.
\newblock {\em International Statistical Review}, 70(3):315--349.

\bibitem[Glasserman, 1990]{glasserman1990gradient}
Glasserman, P. (1990).
\newblock {\em Gradient estimation via perturbation analysis}, volume 116.
\newblock Springer Science \& Business Media.

\bibitem[Glover, 1990]{glover1990tabu}
Glover, F. (1990).
\newblock Tabu search: A tutorial.
\newblock {\em Interfaces}, 20(4):74--94.

\bibitem[Glover, 1999]{glover1999scatter}
Glover, F. (1999).
\newblock Scatter search and path relinking.
\newblock {\em New ideas in optimization}, 138.

\bibitem[Haji and Abdulazeez, 2021]{haji2021comparison}
Haji, S.~H. and Abdulazeez, A.~M. (2021).
\newblock Comparison of optimization techniques based on gradient descent algorithm: A review.
\newblock {\em PalArch's Journal of Archaeology of Egypt/Egyptology}, 18(4):2715--2743.

\bibitem[Hansen, 2000]{hansen2000sample}
Hansen, B.~E. (2000).
\newblock Sample splitting and threshold estimation.
\newblock {\em Econometrica}, 68(3):575--603.

\bibitem[Hinton et~al., 2012]{hinton2012neural}
Hinton, G., Srivastava, N., and Swersky, K. (2012).
\newblock Neural networks for machine learning lecture 6a overview of mini-batch gradient descent.
\newblock {\em Cited on}, 14(8):2.

\bibitem[Ho and Cao, 2012]{ho2012perturbation}
Ho, Y.-C.~L. and Cao, X.-R. (2012).
\newblock {\em Perturbation analysis of discrete event dynamic systems}, volume 145.
\newblock Springer Science \& Business Media.

\bibitem[Holland, 1992]{holland1992adaptation}
Holland, J.~H. (1992).
\newblock {\em Adaptation in natural and artificial systems: an introductory analysis with applications to biology, control, and artificial intelligence}.
\newblock MIT press.

\bibitem[Hu and Fu, 2024]{hu2024convergence}
Hu, J. and Fu, M.~C. (2024).
\newblock On the convergence rate of stochastic approximation for gradient-based stochastic optimization.
\newblock {\em Operations Research}.

\bibitem[Khouja, 1999]{khouja1999single}
Khouja, M. (1999).
\newblock The single-period (news-vendor) problem: literature review and suggestions for future research.
\newblock {\em omega}, 27(5):537--553.

\bibitem[Kim et~al., 2015]{kim2015guide}
Kim, S., Pasupathy, R., and Henderson, S.~G. (2015).
\newblock A guide to sample average approximation.
\newblock {\em Handbook of simulation optimization}, pages 207--243.

\bibitem[Kingma and Ba, 2014]{kingma2014adam}
Kingma, D.~P. and Ba, J. (2014).
\newblock Adam: A method for stochastic optimization.
\newblock {\em arXiv preprint arXiv:1412.6980}.

\bibitem[Lei and Jordan, 2020]{lei2020adaptivity}
Lei, L. and Jordan, M.~I. (2020).
\newblock On the adaptivity of stochastic gradient-based optimization.
\newblock {\em SIAM Journal on Optimization}, 30(2):1473--1500.

\bibitem[Liu et~al., 2020]{liu2020improved}
Liu, Y., Gao, Y., and Yin, W. (2020).
\newblock An improved analysis of stochastic gradient descent with momentum.
\newblock {\em Advances in Neural Information Processing Systems}, 33:18261--18271.

\bibitem[Maryak and Chin, 2001]{maryak2001global}
Maryak, J.~L. and Chin, D.~C. (2001).
\newblock Global random optimization by simultaneous perturbation stochastic approximation.
\newblock In {\em Proceedings of the 2001 American control conference.(Cat. No. 01CH37148)}, volume~2, pages 756--762. IEEE.

\bibitem[Mashkaria et~al., 2023]{mashkaria2023generative}
Mashkaria, S.~M., Krishnamoorthy, S., and Grover, A. (2023).
\newblock Generative pretraining for black-box optimization.
\newblock In {\em International Conference on Machine Learning}, pages 24173--24197. PMLR.

\bibitem[Moss, 2004]{moss2004pressure}
Moss, D.~R. (2004).
\newblock {\em Pressure vessel design manual}.
\newblock Elsevier.

\bibitem[Nesterov, 1983]{nesterov1983method}
Nesterov, Y. (1983).
\newblock A method of solving a convex programming problem with convergence rate o (1/k** 2).
\newblock {\em Doklady Akademii Nauk SSSR}, 269(3):543.

\bibitem[Paszke et~al., 2019]{paszke2019pytorch}
Paszke, A., Gross, S., Massa, F., Lerer, A., Bradbury, J., Chanan, G., Killeen, T., Lin, Z., Gimelshein, N., Antiga, L., et~al. (2019).
\newblock Pytorch: An imperative style, high-performance deep learning library.
\newblock {\em Advances in neural information processing systems}, 32.

\bibitem[Polyak, 1964]{polyak1964some}
Polyak, B.~T. (1964).
\newblock Some methods of speeding up the convergence of iteration methods.
\newblock {\em Ussr computational mathematics and mathematical physics}, 4(5):1--17.

\bibitem[Pronzato et~al., 1984]{pronzato1984general}
Pronzato, L., Walter, E., Venot, A., and Lebruchec, J.-F. (1984).
\newblock A general-purpose global optimizer: Implimentation and applications.
\newblock {\em Mathematics and Computers in Simulation}, 26(5):412--422.

\bibitem[Reeves, 1997]{reeves1997genetic}
Reeves, C.~R. (1997).
\newblock Genetic algorithms for the operations researcher.
\newblock {\em INFORMS journal on computing}, 9(3):231--250.

\bibitem[R{\'e}nyi, 2007]{renyi2007probability}
R{\'e}nyi, A. (2007).
\newblock {\em Probability theory}.
\newblock Courier Corporation.

\bibitem[Rubinstein and Shapiro, 1993]{rubinstein1993discrete}
Rubinstein, R.~Y. and Shapiro, A. (1993).
\newblock {\em Discrete event systems: sensitivity analysis and stochastic optimization by the score function method}, volume~13.
\newblock Wiley.

\bibitem[Ruder, 2016]{ruder2016overview}
Ruder, S. (2016).
\newblock An overview of gradient descent optimization algorithms.
\newblock {\em arXiv preprint arXiv:1609.04747}.

\bibitem[Scarselli and Tsoi, 1998]{scarselli1998universal}
Scarselli, F. and Tsoi, A.~C. (1998).
\newblock Universal approximation using feedforward neural networks: A survey of some existing methods, and some new results.
\newblock {\em Neural networks}, 11(1):15--37.

\bibitem[Shapiro and Wardi, 1996]{shapiro1996convergence}
Shapiro, A. and Wardi, Y. (1996).
\newblock Convergence analysis of gradient descent stochastic algorithms.
\newblock {\em Journal of optimization theory and applications}, 91:439--454.

\bibitem[Shorack and Wellner, 2009]{shorack2009empirical}
Shorack, G.~R. and Wellner, J.~A. (2009).
\newblock {\em Empirical processes with applications to statistics}.
\newblock SIAM.

\bibitem[Skiscim and Golden, 1983]{skiscim1983optimization}
Skiscim, C.~C. and Golden, B.~L. (1983).
\newblock Optimization by simulated annealing: A preliminary computational study for the tsp.
\newblock Technical report, Institute of Electrical and Electronics Engineers (IEEE).

\bibitem[Sobczyk and S{\'a}nchez, 2011]{sobczyk2011fundamental}
Sobczyk, G. and S{\'a}nchez, O.~L. (2011).
\newblock Fundamental theorem of calculus.
\newblock {\em Advances in Applied Clifford Algebras}, 21:221--231.

\bibitem[Spall, 1992]{spall1992multivariate}
Spall, J.~C. (1992).
\newblock Multivariate stochastic approximation using a simultaneous perturbation gradient approximation.
\newblock {\em IEEE transactions on automatic control}, 37(3):332--341.

\bibitem[Spall, 2009]{spall2009feedback}
Spall, J.~C. (2009).
\newblock Feedback and weighting mechanisms for improving jacobian estimates in the adaptive simultaneous perturbation algorithm.
\newblock {\em IEEE Transactions on Automatic Control}, 54(6):1216--1229.

\bibitem[Tekin and Sabuncuoglu, 2004]{tekin2004simulation}
Tekin, E. and Sabuncuoglu, I. (2004).
\newblock Simulation optimization: A comprehensive review on theory and applications.
\newblock {\em IIE transactions}, 36(11):1067--1081.

\bibitem[Trabucco et~al., 2021]{coms}
Trabucco, B., Kumar, A., Geng, X., and Levine, S. (2021).
\newblock Conservative objective models for effective offline model-based optimization.
\newblock In Meila, M. and Zhang, T., editors, {\em Proceedings of the 38th International Conference on Machine Learning}, volume 139 of {\em Proceedings of Machine Learning Research}, pages 10358--10368. PMLR.

\bibitem[Vapnik, 1991]{vapnik1991principles}
Vapnik, V. (1991).
\newblock Principles of risk minimization for learning theory.
\newblock {\em Advances in neural information processing systems}, 4.

\bibitem[Wang and Spall, 2011]{wang2011discrete}
Wang, Q. and Spall, J.~C. (2011).
\newblock Discrete simultaneous perturbation stochastic approximation on loss function with noisy measurements.
\newblock In {\em Proceedings of the 2011 American Control Conference}, pages 4520--4525. IEEE.

\bibitem[Whitley, 1994]{whitley1994genetic}
Whitley, D. (1994).
\newblock A genetic algorithm tutorial.
\newblock {\em Statistics and computing}, 4:65--85.

\bibitem[Yazan and Talu, 2017]{yazan2017comparison}
Yazan, E. and Talu, M.~F. (2017).
\newblock Comparison of the stochastic gradient descent based optimization techniques.
\newblock In {\em 2017 International Artificial Intelligence and Data Processing Symposium (IDAP)}, pages 1--5. IEEE.

\end{thebibliography}
\end{document}